\documentclass[11pt]{article}

\newcommand{\blind}{0}

\makeatletter
\renewcommand\section{\@startsection {section}{1}{\z@}%
	{-3.5ex \@plus -1ex \@minus -.2ex}%
	{2.3ex \@plus.2ex}%
	{\normalfont\fontfamily{phv}\fontsize{16}{19}\bfseries}}
\renewcommand\subsection{\@startsection{subsection}{2}{\z@}%
	{-3.25ex\@plus -1ex \@minus -.2ex}%
	{1.5ex \@plus .2ex}%
	{\normalfont\fontfamily{phv}\fontsize{14}{17}\bfseries}}
\renewcommand\subsubsection{\@startsection{subsubsection}{3}{\z@}%
	{-3.25ex\@plus -1ex \@minus -.2ex}%
	{1.5ex \@plus .2ex}%
	{\normalfont\normalsize\fontfamily{phv}\fontsize{14}{17}\selectfont}}
\makeatother

\usepackage{amsthm,amsfonts,amssymb,amsmath,mathrsfs} 
\usepackage{pgfplotstable} 
\usepackage{graphicx,multirow} 
\usepackage{booktabs} 
\usepackage{algorithm2e}
\usepackage{tikz} 
\usepackage{url} 
\usepackage{rotating} 
\usepackage{textcomp} 
\usepackage{pgfplots} 
\usepackage{xcolor} 
\usepackage{appendix}  
\usepackage{lipsum} 
\usepackage{subcaption} 
\usepackage{diagbox}
\usepackage{longtable}
\DeclareMathSymbol{@}{\mathord}{letters}{"3B} 
\usetikzlibrary{shapes,arrows,positioning,backgrounds, calc} 
\usetikzlibrary{arrows,decorations} 
 
\usepackage{siunitx} 
 
\usepackage{tikz}  
\usepackage{pgfplots}
\usepackage{float} 
 
\newtheorem{theorem}{Theorem}
\newtheorem{lemma}{Lemma}
\newtheorem{proposition}{Proposition}
\newtheorem{corollary}{Corollary}

\theoremstyle{definition}
\newtheorem{definition}{Definition}
\newtheorem{assumption}{Assumption}
 
\theoremstyle{remark}

\definecolor{blues1}{RGB}{198, 219, 239} 
\definecolor{blues2}{RGB}{158, 202, 225} 
\definecolor{blues3}{RGB}{107, 174, 214} 
\definecolor{blues4}{RGB}{49, 130, 189} 
\definecolor{blues5}{RGB}{8, 81, 156} 
\definecolor{blues6}{RGB}{2, 34, 78} 
 
\pgfplotscreateplotcyclelist{mylist}{ 
{blues1}, 
{blues2}, 
{blues3}, 
{blues4}, 
{blues5}, 
{blues6}, 
}

\newcommand{\E}{\mathbb{E}} 
 
\newcommand{\id}[1]{\mathbf{1}_{\{{#1}\}}} 

\DeclareMathOperator*{\argmax}{arg\,max}
\usepackage{natbib}
 \bibpunct[, ]{(}{)}{,}{a}{}{,}%
\usepackage[normalem]{ulem}

\begin{document}


	\def\spacingset#1{\renewcommand{\baselinestretch}%
		{#1}\small\normalsize} \spacingset{1}
	
	\if0\blind
	{
		\title{Reoptimization Algorithms for Contextual Bandits with Knapsack Constraints}
		\author{Zhen Xu $^a$  \\
			$^a$ Management School, University of Liverpool, Liverpool, United Kingdom}
		\date{}
		\maketitle
	} \fi
	
	\if1\blind
	{
		
		\title{Reoptimization Algorithms for Contextual Bandits with Knapsack Constraints}
		\author{Author information is purposely removed for double-blind review}
		\date{}
		\maketitle
		
	} \fi
	\bigskip
	
	\begin{abstract}
	We study new algorithms for Contextual Bandits with Knapsack.  
	In these problems, there are finitely many types of customers, products, and resources.   
	Each product is made from a fixed combination of resources, and resources have finite capacity.  
	A decision maker must assign each arriving customer one out of a set of  multiple possible products.    
	Every assignment of a customer to a product will generate a random reward, which equals an unknown linear function of customer and product features, plus a noise term.  
	The objective is to jointly learn the mean reward function, and to make online assignments to minimize the expected revenue loss relative to an optimal policy that knows the reward function.  
	We propose a natural and simple extension of the Upper-Confidence-Bound (UCB) family of algorithms and apply re-optimization techniques.  
	We show that by taking advantage of re-optimization, our algorithm achieves an average regret of $O(\frac{(\ln T)^3}{T})$ where $T$ is the horizon length.  
	Our bound significantly reduces the $O(\frac{1}{\sqrt{T}})$ bound in the literature for closely related dynamic-pricing problems that are based on re-optimization.
	\end{abstract}
	
	\noindent%
	{\it Keywords:} Revenue Management; Approximation Algorithms; Machine Learning; Regret analysis .
	
	\spacingset{1.5} 

\section{Introduction}
\label{sec:Intro}

Digital platforms increasingly act as central matchmakers between heterogeneous demand and capacity-constrained supply. Consider a ride-hailing platform or an online advertising exchange: each day, the platform must assign millions of requests (e.g., riders, impressions) to a limited pool of resources (e.g., driver hours, advertiser budgets). The platform must make these assignment decisions in real time, yet the revenue generated by each match is rarely known in advance. Instead, it depends on complex, unobserved interactions between user characteristics and product attributes, forcing the platform to learn these values on the fly. 

Crucially, every assignment consumes scarce resources. The platform cannot blindly explore highly uncertain matches simply to learn their value, because allocating a limited resource to a suboptimal request today prevents its use for a highly profitable opportunity tomorrow. This creates an inherently coupled exploration-exploitation tradeoff tightly bound by physical or financial capacities: the platform must learn the value of different matches while meticulously pacing its limited resources over the entire time horizon.

The Contextual Bandits with Knapsacks (CBwK) framework provides a natural mathematical abstraction for this ``learning-while-allocating'' challenge \citep{Bada2013, Agrawal2016}. In a standard CBwK model, the decision maker observes a context, selects an action, and subsequently receives both a stochastic reward and a stochastic consumption of resources. While theoretically expressive, classical CBwK algorithms often assume that resource consumptions are entirely unknown and drawn from general distributions. This requires computationally heavy estimation techniques that struggle to meet the millisecond-level latency requirements of modern platforms. In reality, platform operators typically know exactly how much budget an impression will deduct, or how much capacity a specific task will require, at the exact moment of allocation. 

Motivated by this operational reality, we focus on a structured, highly practical variation of the problem: a contextual bandit setting where the resource consumption of each action is fixed and known, but the rewards remain an unknown linear function of the contexts. The objective is to jointly learn the mean reward function and make online assignment decisions to minimize expected revenue loss relative to an optimal full-information policy. Customers arriving at the platform act as the contexts, while the assignable products or services serve as the arms.

To solve this, we bridge bandit learning with classical network revenue management by proposing a UCB-guided linear programming (LP) re-solving heuristic. By periodically re-optimizing a fluid LP that is updated with the upper confidence bounds of the unknown reward parameters, our algorithm explicitly captures the opportunity cost of capacity via time-varying dual shadow prices. 

This approach contributes to the existing literature in two primary ways. First, by exploiting the specific structure of known consumptions paired with unknown linear rewards, we develop an algorithm that is computationally tractable and tailored for high-frequency resource allocation. Second, we provide rigorous theoretical guarantees for this re-solving approach. We prove that our algorithm achieves an average regret of $O(\frac{\ln^3 (T)}{T})$, where $T$ is the horizon length. This fast rate significantly improves upon the $O(\frac{1}{\sqrt{T}})$ or $O(\sqrt{\frac{\ln(T)}{T}})$ bounds prevalent in the existing CBwK and dynamic-pricing literature. To put this theoretical contribution into perspective, Table \ref{tab:literature} summarizes how our structural assumptions and algorithmic design compare to related bounds in the CBwK literature.

\begin{table}[htbp]
	\centering
	\footnotesize
	\begin{tabular}{lllc}
		\toprule
		\textbf{Reference} & \textbf{Rewards and Objective} & \textbf{Capacity Constraint} & \textbf{Regret Bound} \\
		\midrule
		\citet{Agrawal2014} & Linear reward, concave objective & Convex constraints & $O\big(\sqrt{\frac{\ln(T)}{T}}\big)$ \\
		\addlinespace
		\citet{BLS2014} & General reward and objective & Linear knapsack & $O\big(\sqrt{\frac{\ln(T)}{T}}\big)$ \\
		\addlinespace
		\citet{ADL2016} & General reward, concave objective & Linear knapsack & $O\big(\sqrt{\frac{\ln(T)}{T}}\big)$ \\
		\addlinespace
		\citet{Agrawal2016} & Linear reward objective & Linear knapsack & $O\big(\sqrt{\frac{\ln^2(T)}{T}}\big)$ \\
		\addlinespace
		\textbf{Our Work} & Linear reward objective & Linear knapsack & $\mathbf{O\big(\frac{\ln^3(T)}{T}\big)}$ \\
		\bottomrule
	\end{tabular}
	\caption{Comparison of regret bounds in the CBwK literature under proportional capacity scaling.}
	\label{tab:literature}
\end{table}

\section{Literature Review}
\label{sec:Literature}

Our problem setup and algorithmic design connect several active streams of research in operations research and computer science, specifically multi-armed bandits, contextual learning, and network revenue management.

\subsection{Bandits and Contextual Learning}
The fundamental conflict between taking actions for immediate payoff versus long-term informational gain is formalized in the multi-armed bandits (MAB) literature \citep{Thompson1933, Whittle1980, Jones1989, Mahajan2008}. \citet{Slivkins2019} provides a comprehensive overview of this vast field. 

To model the relationship between observed contexts and outcomes, the contextual bandits (CB) literature typically divides into agnostic and realizability-based approaches. Agnostic models avoid assumptions about the underlying context-outcome relationship, attempting instead to compete with the best policy in a given class \citep{Auer2003, Dudik2011, Agrawal2014}. However, these approaches often require complex cost-sensitive classification oracles to maintain computational efficiency \citep{LWJR2010}. Alternatively, realizability-based approaches assume the outcome distribution follows a known function class. When this relationship is linear, optimal regret bounds can be achieved using UCB-type algorithms \citep{YDC2011, CLLR2011}. While fundamental, these standard CB models abstract away the budget and capacity constraints that are critical to physical and financial operations.

\subsection{Contextual Bandits with Knapsacks (CBwK)}
To capture the presence of finite resources, \citet{Bada2013} introduced the bandits with knapsacks (BwK) framework, which has since been applied to dynamic pricing \citep{OmarAssaf2012}, the Adwords problem \citep{Mehta2007}, and display advertising \citep{Edwards2020}. 

Extending BwK to include contextual information naturally led to the CBwK framework. In the agnostic setting, researchers have generalized classical techniques to accommodate knapsack constraints \citep{BLS2014, ADL2016}. Yet, as \citet{Agrawal2016} demonstrate, relying on classification oracles in an agnostic CBwK setting can be NP-hard even for linear cases. Consequently, significant focus has shifted to linear realizability assumptions. By leveraging confidence ellipsoids, \citet{Agrawal2016} developed near-optimal algorithms for stochastic linear CBwK, while \citet{Slivkins2022} recently extended these ideas to smoothed contextual settings where contexts are subject to Gaussian perturbation. 

Other works provide bounds tailored to different operational assumptions. For example, \citet{AP2015} prove regret bounds that are logarithmic in the horizon but exponential in the number of resources, while \citet{NKRA2019} establish an $O(\ln T)$ competitive ratio under adversarial arrivals. For dynamic pricing with unknown demand functions, \citet{FSW2018} analyze a Thompson sampling-based algorithm yielding an $O(\sqrt{T})$ average regret. In contrast to these settings, the primary learning target in our model is the reward distribution, structured linearly over contexts, which allows us to achieve much faster convergence rates by exploiting known resource consumption matrices.

\subsection{Network Revenue Management and Re-Solving Heuristics}
From an operations perspective, our setting is a direct extension of the network revenue management (NRM) problem \citep{Gallego1994, Gallego1997}. NRM typically models dynamic resource allocation under capacity constraints, scaled asymptotically by a factor of $T$ to represent growing market sizes with constant resource scarcity \citep{Talluri2004}. 

Because exact solutions suffer from the curse of dimensionality, a widely used heuristic is the deterministic linear programming (DLP) approximation, which replaces stochastic arrivals with their expectations. Under this scaling, static DLP-based policies suffer an $O(\sqrt{T})$ revenue loss \citep{Gallego1994}. To incorporate information revealed over time and address stochastic drift, practitioners often re-optimize the DLP periodically using remaining capacities. \citet{Wang2008} show that a single endogenously timed re-solving step using probabilistic allocation drops the revenue loss to $o(\sqrt{T})$. Further, \citet{Jasin2012} prove that frequent, periodic re-solving yields an $O(1)$ revenue loss when demand distributions are fully known. When demand must be learned, \citet{Jasin2014} and \citet{ChenRoss2014} propose re-solving algorithms with $O(\ln{T})$ revenue loss. 

Our work also touches upon online knapsack and secretary problems \citep{Kley1998, Klein2005, Baba2007, Arlotto2019, Arlotto2020}. Notably, \citet{VBG2021} recently developed a re-optimization framework for single-resource stochastic allocation, achieving $O(\frac{1}{T})$ regret for online probing and $O(\frac{\ln(T)}{T})$ for distribution-agnostic knapsacks. Our approach extends this operational philosophy—frequent, LP-based re-optimization—into the multi-resource contextual bandit environment, demonstrating that maintaining tight pacing control via shadow prices is critical for overcoming the standard $\sqrt{T}$ exploration penalty.

\subsection{Paper Organization}
The paper is organized as follows. 
After the introduction and literature review in \S\ref{sec:Intro}, we describe our model and settings in \S\ref{sec:model}. 
In \S\ref{sec:Policy}, we introduce the re-optimization idea and  propose a re-solving based online learning algorithms. 
Furthermore, in \S\ref{sec:Benchmark} we also establish an LP optimization as an benchmark for any online algorithms. 
In \S\ref{sec:Dev}, we explain why our main ideas work well by introducing an expected capacity deviation analysis. 
Then in \S\ref{sec:Regret}, we analyze and bound the regret for our reoptimization algorithm.
In \S\ref{sec:Numerical}, we conduct a numerical to highlight the significance of re-solving in guaranteeing logarithmic bound performance.  
Finally, the paper concludes in \S\ref{sec:con}, and all the proofs can be found in the appendix. 

\section{Model}
\label{sec:model}
Consider a finite horizon indexed by $t$, $t=1,2,\ldots,T$. 
There is a finite set of \emph{resources} indexed by $k=1,\ldots, K$ that are not replenished over the horizon.  
Each resource $k$ has a capacity of $B_k$, $k=1,\ldots,K$.  
All the resources are known at time $1$. 

There is a finite set of \emph{products} indexed by $j=0,1,\ldots J$ that are offered.   
Each product $j$ can be described by a size-$J$ vector of features $Y_j$, $j=1,\ldots,J$.  
We assume that $||Y_j|| \le 1$, $j=1,\ldots,J$.  
We denote the set of all such feature vectors by $\mathcal{Y}$.  
Let $Y$ be a matrix  whose columns are all the $J$ feature vectors.
We assume that the products can be made instantaneously from the resources.  
Each product $j$ requires a fixed amount $a_{jk}$ of resource $k$, $j=1,\ldots,J$, $k=1,\ldots,K$.

There are $I$ customer types.  
In each period $t$, there is a known probability $\lambda_i$ that a customer of type $i$ will arrive, $i=1,\ldots,I$.  
We assume that there is at most one arrival in each period.  
Each customer $i$ is described by a size-$I$ vector of features $X_i$, $i=1,\ldots, I$.  
We denote the set of all such feature vectors by $\mathcal{X}$.   
Each customer $i$ can be satisfied with any of the products $j$, $j=1,\ldots,J$, $i=1,\ldots,I$.   
When a customer $i$ is assigned to a product $j$, a reward of $X_i^T A Y_j+\xi$ is generated, where $A$ is an unknown $I \times J$ matrix, and $\xi$ is a zero-mean random variable bounded by $1$, with variance at most $\sigma$.  
If a customer is rejected, then we can assume that the customer is assigned to a resource with infinite capacity and feature vector $0$. 
Let $X$ be a matrix  whose rows are all the $I$ feature vectors.   
We assume that the rank of $X$ is $I$ and the rank of $Y$ is $J$.   
This assumption is without loss of generality.  
Let $U=(XA)^T$.  
We normalize $U$ such that each column $||U_{i}|| \le 1$ and hence $U_{i}Y_{j} \le 1$ for all $i=1,\ldots,I$, $j=1,\ldots,J$.

\subsection{Summary of notation}
\begin{itemize}
	\item $X$ is an $I\times I$ matrix specifying the features of the $I$ customer types. 
	\item $Y$ is an $J\times J$ matrix specifying the features of the $J$ product types. 
	\item $\lambda_i$ is the probability of having customer $i$ arrive in any given period and $\bar \lambda$ is $\max(\lambda_1,\lambda_2,\cdots,\lambda_I)$. 
	\item $A$ is an unknown $M\times N$ matrix  that relates customer and product features to rewards.
	\item $a_{jk}$ is the amount of resource $k$ required by product $j$ and $C$ is $\max\limits_{j,k}{a_{jk}}$.
	\item $U$ is $(XA)^T$.  
	The $i$-th column of $U$ is $U_i$, which is uniformly bounded by $N_0$.
	\item $\hat{U}_{it}$ is our estimator for $U_i$ in period $t$. 
	\item $M_{it}$ is a $J\times J$ matrix  used to estimate $U_i$ in period $t$. 
	\item $i(t)$ the type of the customer arriving in period $t$.
	\item $j(t)$ the type of the product assigned to the customer arriving in period $t$.
	\item $r_t=\xi_t+U_{i(t)}^T Y_{j(t)}$ is the reward earned in period $t$ for assigning the customer $i(t)$ to the product $j(t)$. 
	\item ${CB}_{it}$ is a confidence ball maintained around the estimator $\hat{U}_{it}$ of $U_i$ in period $t$. 
	\item $\beta_t$ is the radius of the confidence interval ${CB}_{it}$ maintained around the estimator $\hat{U}_{it}$ of $U_i$ in period $t$. 
	\item $B_k$ is total capacity of resource $k$, and $B$ is the vector of $B=(B_1,B_2,\cdots,B_K)$.
	\item $b_k=\frac{B_k}{T}$ is average capacity of resource $k$ in each period, and $b=(b_1,b_2,\cdots,b_K)$.
	\item $b_{kt}$ is the amount of capacity of resource $k$ that algorithm $ON$ allocates to period $t$. 
	\item $x^*$  is a solution to \eqref{Key-optimization} that assigns customers to products in expectation, with perfect knowledge of the reward function. 
	\item $D$ is the feasible region of \eqref{Key-optimization}, which corresponds to vector $b$. 
	\item $\Delta$ is the minimum optimality gap for any sub-optimal extreme point for \eqref{Key-optimization}. 
	\item $x_t$ is an optimal solution to an approximation \eqref{equ:max-potential} at time $t$ of \eqref{Key-optimization} using a learned reward estimate.
	\item $x^*_t$ is an optimal solution to a variant \eqref{Key-optimization-Dt} at time $t$ of \eqref{Key-optimization} using the true reward function.
	\item $D_t$ is the feasible region of \eqref{equ:max-potential} and \eqref{Key-optimization-Dt}, which corresponds to capacity vector $b_t$.
	\item $\epsilon_{kt}$: the deviation from average of the capacity of resource $k$ that is used in period $t$.
\end{itemize}

\subsection{Stochastic formulation and optimal policy}
Let $B_{kt}$ denote the remaining capacity of resource $k$ at the beginning of period $t$ under control policy. 
Recall that the beginning of period $t$, the arriving customer, if any, reveals her type $i(t)$.  The decision maker must assign the customer to a product $j(t)$ or reject the customer (or equivalently, assign the customer to the null product $j(t)=0$).
A \emph{feasible policy} is a non-anticipatory policy that always assigns a customer to a product with sufficient resources in every period $t$.  Let $\Pi$ denote the set of feasible policies.

\begin{definition}
	\label{def:reward-path}
	For a particular realized assignment path $\{(i(s),j(s)),s=1,2,\cdots,T\}$, the reward earned under this path is defined as
	\begin{align*}
		\textrm{TR}^{path}=\sum\limits_{s=1}^{T} (U_{i(s)}^T Y_{j(s)}+\xi_s).
	\end{align*}
\end{definition}

For any policy $\pi \in \Pi$, the expected Total Revenue (TR) earned under $\pi$ is the expected sum over all possible assignment path $\{(i(s),j^\pi(s)),s=1,2,\cdots,T\}$.  
That is
\begin{align}
	\label{equ:def-reward-path}
	\textrm{TR}^{\pi}=\E \sum\limits_{s=1}^{T} U_{i(s)}^T Y_{j^\pi(s)}.
\end{align}
The stochastic formulation of the problem is given by
\begin{align}
	\label{def:opt}
	\textrm{TR}^{opt} = &\max\limits_{\pi \in \Pi} \E \sum\limits_{s=1}^{T} U_{i(s)}^T Y_{j(s)}\\ \nonumber
	\textrm{ s.t. }& \sum\limits_{s=1}^T  \sum\limits_{j=1}^{J} a_{jk} \id{j(s)=j}  \le B_k\mbox{ a.s.}, \forall k\in[1,K].
\end{align}
Note that the constraints must hold almost surely, i.e., with probability 1.

In theory, $\textrm{TR}^{opt}$ can be obtained by using dynamic programming. 
Unfortunately, the well-known curse of dimensionality makes computing the exact solution for this dynamic program intractable.

\section{Policy and Benchmark}
\label{sec:Policy}
We will describe precisely our policy and benchmark.  We start by summarizing the main ideas of the policy.
\subsection{Summary of main ideas}
Our approach is to gradually learn the columns of the matrix $U=(XA)^T$, which we denote as $U_1,U_2,\cdots,U_I$.  In the case $I=1$, a reasonable estimate of true vector $U_i$ can be constructed by minimizing the squared loss function $\sum\limits_{s=1}^t(r_s-U_1 Y_{j(s)})^2$, where $r_s$ is the reward collected at $s$.  
In this case, if we define the $J\times J$ matrix $M_t$ as $M_t=\sum\limits_{s=1}^{t} Y_{j(s)}Y_{j(s)}^T$, then the estimator is given by 
\begin{align*}
	\hat{U}=M_t^{-1} \sum\limits_{s=1}^{t} r_s Y_{j(s)}.
\end{align*}
In the case $I>1$, we similarly use $J\times J$ matrices $M_{it}=\sum\limits_{s=1}^{t} \id{i(s)=i}Y_{j(s)}Y_{j(s)}^T$ to estimate $U_i$ at time $t$, $i=1,\ldots,I$, $t=1,\ldots, T$.  

First, we define the following $2$-norm for each matrix  $M_{it}$:
\begin{align*}
	\|u\|_{2,M_{it}} =\sqrt{u^T M_{it} u}.
\end{align*}
We will maintain an estimate $\hat{U}_{it}$ of $U_i$ in period $t$, $i=1,\ldots,I$, $t=1,\ldots,T$.  
Let 
\begin{align}
	\label{equ:beta}
	\beta_t=\max\{21 J \ln(t) \ln(It^2/\delta), (1.5 \ln(It^2/\delta))^2\},
\end{align}
where $\delta$ is the parameter to control the estimate accuracy which will be determined later.  
Define a confidence ball of radius $\beta_t$ around $\hat{U}_{it}$ as follows:
\begin{eqnarray}
	\label{equ:ConfidenceBall}
	\textrm{CB}_{it}&=&\{v: \|v-\hat{U}_{it} \|_{2,M_{it}} \le \beta_t \},\quad \forall i\in[1,I],\\ \nonumber	 
	\textrm{CB}_t&=&\prod\limits_{i=1}^I \textrm{CB}_{it}.
\end{eqnarray}
In our analysis, we will show that, with our choice of $\beta_t$, $U$ always remains inside  $\textrm{CB}_t$ with high probability, for $t=1,\ldots,T$.  


Our algorithm uses the following ideas.  
We plan to use an average amount of capacity in each period.  
This average is the total capacity divided by the number of periods.  
In each period, based on our estimate of the unknown parameter, we solve an LP specific to that period, to allocate customers to resources.  
We use upper confidence bounds to deal with uncertainty in this assignment that arise from lack of knowledge about the unknown parameters.  
We use the average capacity to constrain this assignment.  
We then make the necessary assignment if any for an arriving customer, observe the realized reward, and perform a least-squares update of our parameters.  
The algorithm is a re-solving algorithm because the LPs that are solved in different periods are identical to each other, except for the amount of uncertainty surrounding the unknown parameters.

\subsection{Online Algorithm $ON$}
\begin{enumerate}
	\item Initialization: Set $b_{k1}=\frac{B_k}{T}$, $M_{i1}=I_{J \times J}$ and $\hat{U}_{i1}=0$, $i=1,2,\cdots,I$, $k=1,\ldots,K$.
	\item For $t=1$ to $T$:
	\begin{eqnarray*}
		{CB}_{it}&=&\{v: \|v-\hat{U}_{it} \|_{2,M_{it}} \le \beta_t \},\quad \forall i\in[1,I],\\
		{CB}_t&=&\prod\limits_{i=1}^I {CB}_{it}.
	\end{eqnarray*}
	
	\item In each period $t$, solve the following optimization problem to obtain probabilities $x_t$:
	\begin{align}
		\label{equ:max-potential}
		x_t =\argmax\limits_{x\in D_t} \max\limits_{V \in {CB}_t} \sum_{i,j} x_{ij} V_i^T Y_j,
	\end{align}
	where 
	\begin{align*}
		D_t=\{x\ |\ \sum_{i=1}^I \sum_{j=1}^J x_{ij} a_{jk} \le b_{kt},\ \forall k\in[1,K];\ \sum\limits_{j} x_{ij} \le \lambda_i,\ \forall i\in[1,I];\ x \ge 0\}
	\end{align*}
	is the feasible region.   
	The inner ``max'' means that we choose the best potential reward based on all the possible vectors in ${CB}_t$. Note that this program is referred to in the literature as a \emph{bilinear program} \citep{PelegMeir2008}.  Such programs can be solved by standard techniques.
	
	\item Use the solution $x_t$ to assign the customer at $t$, if any, to a resource.  
	More precisely, if the customer is of type $i$, $i=1,\ldots,I$, then assign the customer to each product $j$ with probability $\frac{x_{ijt}}{\lambda_i}$, $j=1,\ldots,J$.  
	If there are insufficient resources to make the assigned product, then reject the customer (or equivalently, assign the customer to a null resource).  The result of the assignment is a realization $(i(t),j(t))$ based on $x_t$.  
	That is, the customer at $t$ of type $i(t)$ is assigned to product $j(t)$.
	
	\item Define a random variable $d_{kt}$, whose mean equals to $b_{kt}$, to represent the realized usage of each resource $k$ in period $t$, via
	\begin{eqnarray}
		\label{equ:realization_D}
		d_{kt}&=&\left\{ \begin{array} {ll}
			a_{j(t)k}, \quad \forall k\in[1,K],\quad \mbox{if there is an arrival at $t$},\\
			0, \quad o.w.
		\end{array}\right.	
	\end{eqnarray}
	
	\item Update the capacity allocated to period $t+1$ via
	\begin{align}
		\label{equ:update_D}
		b_{k t+1}=b_{kt} - \frac{(d_{kt}-b_{kt})}{T-t}, \quad \forall k\in[1,K].
	\end{align}
	
	\item Collect the reward $r_t=\xi_t+U_{i(t)}^T Y_{j(t)}$ and update the estimator $\hat{U}_{it+1}$ and matrix $M_{it+1}$ via
	\begin{eqnarray}
		M_{i t+1}&=&\left\{ \begin{array} {ll}
			M_{i t}+ Y_{j(t)} Y_{j(t)}^T,\quad i=i(t),\\
			M_{i t},\quad o.w.
		\end{array}\right. \label{equ:update_M}\\
		\hat{U}_{it+1}&=&\left\{ \begin{array} {ll}
			M_{it+1}^{-1} \sum\limits_{s=1}^t r_s Y_{j(s)}\id{i(s)=i} ,\quad i=i(t),\\
			\hat{U}_{it},\quad o.w.
		\end{array}\right.  \label{equ:update_XA}
	\end{eqnarray}
	Under this updating rule, we have
	\begin{align*}
		\hat{U}_{it}=M_{it}^{-1} \sum\limits_{s=1}^{t-1} r_s Y_{j(s)} \id{i(s)=i}= M_{it}^{-1} (M_{it-1} \hat{U}_{it-1}+r_{t-1} Y_{j(t-1)} \id{i(t-1)=i})
	\end{align*}
	for $t\in[1,T]$.
\end{enumerate}

\subsection{Benchmark}
\label{sec:Benchmark}
We will use as benchmark an optimal solution to the following optimization problem
\begin{align}
	\label{Key-optimization}
	x^* =\argmax\limits_{x \in D} \sum\limits_{i=1}^I \sum_{j=1}^J x_{ij}  {U_i}^T Y_j,
\end{align}
where
\begin{align*}
	D=\{x\ |\ \sum\limits_{i=1}^I \sum_{j=1}^J x_{ij} a_{jk} \le \frac{B_k}{T},\ \forall k\in[1,K];\ \sum\limits_{j=1}^J x_{ij} \le \lambda_i,\ \forall i\in[1,I];\ x \ge 0\}.
\end{align*}
Note that since $U$ is unknown, the solution $x^*$ is unknown.  
The key differences between $x^*$ and $x'_t$, which is used by our online algorithm, are that 
(i) the former is constrained by the true average capacity, namely $B/T$, while the latter is constrained by the actual average remaining capacity; 
and (ii) the former is defined by the true reward parameters, namely ${U_i}^T Y_j$, while the latter is defined by an upper bound on the estimated reward parameters. 

Alternatively, the above problem can be written as
\begin{eqnarray}
	\label{eq:KKT-c}
	\max                  && \sum\limits_{i=1}^I \sum_{j=1}^J x_{ij}{U_i}^T Y_j,\\ \label{eq:tight-c}
	\textrm{ subject to } && \sum_{i=1}^I \sum_{j=1}^J x_{ij} a_{jk} \le \frac{B_k}{T}, \quad \forall k\in[1,K], \\\nonumber
	&&  \sum_{j=1}^J x_{ij} \le \lambda_i, \quad \forall i\in[1,I],\\ \nonumber             
	&&  x \ge 0.
\end{eqnarray}
We make the following key assumption about \eqref{Key-optimization}, which will enable us to quantify the regret of the online algorithm:
\begin{assumption}
	\label{ass:1}
	There is a unique dual optimal solution to \eqref{Key-optimization}.  
	Moreover, the variables $(\mu^*_1,\cdots,\mu^*_K)$ corresponding to the capacity constraints \eqref{eq:tight-c} in this solution are strictly positive.
\end{assumption}
The above assumption is similar to those made in \cite{Jasin2014} and \cite{CJD2018}.
These works study price control for network revenue management, where the objective function is the revenue rate $r(\lambda)=\lambda p(\lambda)$. 
Assuming the convexity of $r(\cdot)$ and the linearity of constraints, the resulting deterministic optimization problem becomes a convex programming problem.  
The assumption imposed by both of these papers is that the optimal vector of dual variables corresponding to the capacity constraints be strictly positive.
Remark: \cite{Jasin2014} lists two types of assumption.  
The first type is strict positivity of the dual variables, as we have assumed.
The second type is a condition A5 that requires that the static price $p_D$ be neither as low nor so high, so that $\lambda^D$ lies in a proper interior of a strict positive region.  The latter assumption is just as strict, as does not happen in general in our model, since our optimal vector $x^*_t$ might have some zero components.


The optimization problem \eqref{Key-optimization} provides us with a performance benchmark in the following sense.
\begin{proposition} 
	\label{prop:upperBound} 
	Fix an optimal solution $x^*$ of \eqref{Key-optimization}, which is an extreme point.  
	Let 
	\begin{align*}
		R^{u}= \sum\limits_{i,j} x^*_{ij} {U_i}^T Y_j.
	\end{align*}
	Then the average reward per period of any online algorithm is bounded above by $R^{u}$, and the total reward over the horizon by $R^{u}T$.
\end{proposition}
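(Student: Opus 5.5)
We will prove the statement by the standard fluid-relaxation argument. We show that the expected assignment frequencies of any feasible policy form a feasible point of \eqref{Key-optimization}, whose objective equals that policy's expected average reward. Fix any feasible (non-anticipatory) policy $\pi\in\Pi$; it may even know $U$, so the bound covers every online algorithm, including $ON$. Define
\begin{align*}
y_{ij} = \frac{1}{T}\sum_{s=1}^T \mathbb{P}\bigl(i(s)=i,\ j^\pi(s)=j\bigr), \quad i\in[1,I],\ j\in[1,J].
\end{align*}
By linearity of expectation and \eqref{equ:def-reward-path}, $\textrm{TR}^\pi = \sum_{s}\sum_{i,j}\mathbb{P}(i(s)=i,j^\pi(s)=j)\,U_i^TY_j = T\sum_{i,j} y_{ij}U_i^TY_j$. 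The noise terms $\xi_s$ vanish in expectation. For the realized-reward version, the noise has zero mean conditionally on the history and the current assignment, so it also contributes nothing.

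Next we check that $y\in D$. Nonnegativity is immediate. For the demand constraints, note that the arrival type $i(s)$ is drawn independently of the past with $\mathbb{P}(i(s)=i)=\lambda_i$, and at most one product is assigned per arrival. Hence $\sum_j \mathbb{P}(i(s)=i,j^\pi(s)=j)\le \lambda_i$ for each $s$, and averaging over $s$ gives $\sum_j y_{ij}\le\lambda_i$. For the capacity constraints, feasibility of $\pi$ gives $\sum_{s}\sum_j a_{jk}\id{j^\pi(s)=j}\le B_k$ almost surely. Taking expectations and dividing by $T$ yields $\sum_{i,j} y_{ij}a_{jk}\le B_k/T$. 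The null product $j=0$ uses no resources and earns zero reward, so it can be dropped.

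Therefore $\textrm{TR}^\pi/T = \sum_{i,j}y_{ij}U_i^TY_j \le \max_{x\in D}\sum_{i,j}x_{ij}U_i^TY_j = R^u$. Since $D$ is a nonempty bounded polytope, the maximum is attained at an extreme point, so taking $x^*$ to be an extreme point loses nothing. Maximizing over $\pi$ gives $\textrm{TR}^{opt}\le R^uT$, which is the total-reward claim, and the per-period claim is the same inequality divided by $T$.

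The only delicate step is the demand constraint. It requires that the arrival in period $s$ be independent of the policy's past decisions, so that the marginal probability of type $i$ is exactly $\lambda_i$ even though $\pi$ is adaptive. This holds because arrivals are i.i.d. across periods and exogenous, and because $j^\pi(s)$ is chosen only after $i(s)$ is revealed. We will state this conditioning explicitly: conditioning on $\mathcal{F}_{s-1}$ gives $\mathbb{P}(i(s)=i\mid\mathcal{F}_{s-1})=\lambda_i$. The capacity constraint, by contrast, only needs the almost-sure constraint in \eqref{def:opt} together with monotonicity of expectation.
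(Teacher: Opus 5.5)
Your proof is correct. It uses the same core step as the paper: the expected time-averaged assignment is a feasible point of \eqref{Key-optimization}. The difference is which object you average.

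The paper first introduces a clairvoyant benchmark $OFF$. On each sample path $\omega$, $OFF$ solves an LP with the realized arrival counts $\Lambda_i(\omega)$. The paper time-averages that path-wise solution and observes that $\E[x(\omega)]\in D$, since $\E[\Lambda_i(\omega)]=T\lambda_i$ and the capacity constraints hold path by path. It then concludes for online algorithms through a comparison it never writes out: on every path, a feasible policy's realized assignment is feasible for $OFF$'s LP. You skip the intermediate benchmark and apply the averaging-and-expectation step directly to the policy's assignment probabilities $y_{ij}$.

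What the paper's route buys is a formally stronger statement. Even the fractional hindsight optimum is at most $R^uT$ in expectation, though this rests on the unstated path-wise comparison. Your route is self-contained and needs nothing beyond linearity and monotonicity of expectation.

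The step you flag as delicate is not actually delicate. The inequality $\sum_j \mathbb{P}(i(s)=i,\,j^\pi(s)=j)\le \mathbb{P}(i(s)=i)=\lambda_i$ follows from two facts alone: the events are disjoint in $j$, and arrivals are exogenous. No conditioning on $\mathcal{F}_{s-1}$ is needed. As a result, your argument never uses non-anticipativity, so it also bounds clairvoyant policies that make integral assignments. This recovers most of what the paper's $OFF$ detour was meant to deliver.
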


\begin{proof}[Proof of Proposition \ref{prop:upperBound}:]
	Let $OFF$ be an offline algorithm that knows a priori all demand arrivals, as well as $U$ and $Y$, and makes optimal decisions given this information.  
	On any sample path $\omega$, where the total number of arrivals over the horizon of type $i$ is $\Lambda_i(\omega)$, with $\E[\Lambda_i(\omega)]=T\lambda_i$, $OFF$ makes decisions $x_t(\omega)$ in each period $t$, to optimize
	\begin{eqnarray}
		\max                  && \sum_{i,j}\sum_{t=1}^T x_{ijt}(\omega) {U_i}^T Y_j,\\ \nonumber 
		\textrm{ subject to } && \sum_{i=1}^I \sum_{j=1}^J \sum_{t=1}^T x_{ijt}(\omega) a_{jk}\le B_k,\quad \forall k\in[1,K], \\ \nonumber
		&&  \sum_{t=1}^T\sum_{j=1}^J x_{ijt}(\omega)\le\Lambda_i(\omega),\quad \forall i\in[1,I],\\\nonumber	       
		&&  x_t(\omega) \ge 0.
	\end{eqnarray}
	Let $x(\omega)=\frac{\sum\limits_{t=1}^T x_t(\omega)}{T}$.  
	Equivalently, $OFF$ maximizes
	\begin{eqnarray}
		\max                  && \sum_{i,j} x_{ij}(\omega)  {U_i}^T Y_j,\\ \nonumber
		\textrm{ subject to } && \sum_{i=1}^I \sum_{j=1}^J x_{ij}(\omega) a_{jk}\le\frac{B_k}{T}, \quad \forall k\in[1,K], \\ \nonumber
		&&  \sum_{j=1}^J x_{ij}(\omega) \le \Lambda_i(\omega)/T, \quad \forall i\in[1,I],\\\nonumber
		&&  x(\omega) \ge 0.
	\end{eqnarray}
	Thus, $\E[x(\omega)]$ is feasible for the constraint of \eqref{Key-optimization}.  
	It follows that 
	\begin{align*}
		\E[\sum_{i,j} x_{ij}(\omega) {U_i}^T Y_j] = \sum_{i,j} \E[x_{ij}(\omega) ] {U_i}^T Y_j
	\end{align*}
	is no more than the optimal value of  \eqref{Key-optimization}, which is $R^{u}$.  
	We conclude that no online algorithm can earn more than $R^{u}$ in expectation in each period on average, or $R^{u} T$ over the entire horizon. 
\end{proof}

\section{Deviation from ideal capacity usage}
\label{sec:Dev}
\subsection{re-optimization on the perturbation of $b$}
Recall \eqref{Key-optimization}, which defines the benchmark solution $x^*$.  
In general, we cannot solve \eqref{Key-optimization} because we do not have access to $U$.  
Even if we did, the solution would not be useful because the randomness in demand will cause our capacity to deviate in a random manner.  
Nevertheless, we can adjust our use of capacity to match that of \eqref{Key-optimization}.   
More precisely, we will approximate the feasible region $D$ of \eqref{Key-optimization}, which is defined by the capacity level $B/T$, with a regions $D_t$, which is defined by the average actual capacity remaining at $t$, $t=1,\ldots,T$.

In the following variant of problem \eqref{Key-optimization}, we think of $b'=(b'_1,b'_2,\cdots,b'_K)$ as a perturbation to the  ideal average capacity $B/T$.
Consider the solution when we change $B/T$ to $b'$:
\begin{align}
	\label{Key-optimization-D}
	x'=\argmax\limits_{x\in D'} \sum\limits_{i=1}^I\sum_{j=1}^J x_{ij} {U_i}^T Y_j,
\end{align}
where, 
\begin{align*}
	D'=\{x\ |\ \sum\limits_{i=1}^I\sum_{j=1}^J x_{ij} a_{jk} \le  b'_k,\ \forall k\in[1,K];\ \sum\limits_{j=1}^J x_{ij} \le \lambda_i,\ \forall i\in[1,I];\ x^* \ge 0\}.
\end{align*}
Or alternatively,
\begin{eqnarray}
	\label{eq:KKT}
	\max                 && \sum\limits_{i=1}^I\sum_{j=1}^J x_{ij} {U_i}^T Y_j,\\  \label{eq:tight}
	\textrm{ subject to }&& \sum\limits_{i} x_{ij}  a_{jk} \le b'_k,\quad \forall k\in[1,K],\\ \nonumber
	&& \sum_{j} x_{ij} \le \lambda_i, \quad \forall i\in[1,I],\\ \nonumber
	&& x \ge 0.
\end{eqnarray}
Define the expected reward $R(b')$, which is a function of the capacity $b'$, as the optimal value of \eqref{Key-optimization-D}.  

A direct consequence of Assumption 1 is that $\frac{\partial R}{\partial B'_k}(\frac{B}{T})=\mu^*_k$ for $k=1,2,\cdots,K$.
Furthermore, we have the following proposition.
\begin{proposition}  
	\label{prop:extrmempoints}
	Let $\Delta>0$ be the minimum optimality gap for all sub-optimal extreme points of \eqref{Key-optimization}.  
	Then there exists a constant $\epsilon$ such that if $\|b'-\frac{B}{T}\|_\infty \le \epsilon$, then the followings hold:
	\begin{itemize}
		\item[i)] $\frac{\partial R}{\partial b'_k}=\mu^*_k$ for $k=1,2,\cdots,K$.
		
		\item[ii)] There is a unique optimal solution to \eqref{Key-optimization-D}.  
		
		\item[iii)] For any extreme point of $D'$ that is sub-optimal for \eqref{Key-optimization-D}, the expected reward at the point is at least $\frac{\Delta}{3}$ smaller than $R(b')$. 
		\item[iv)] For any optimal solution $x^*$ to \eqref{Key-optimization}, there exists a $IJ \times K$-matrix $Q$, such that $x'= x^* +Q(b'-\frac{B}{T})$ is an optimal extreme point of \eqref{Key-optimization-D}.  
	\end{itemize}
\end{proposition}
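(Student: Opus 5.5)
The plan is a standard LP sensitivity (basis stability) argument, with Assumption~\ref{ass:1} as the key input. Write \eqref{Key-optimization} in standard form by adding slacks to the $K$ capacity constraints and the $I$ demand constraints. The variables are then $x\in\mathbb{R}^{IJ}$ and slacks $s\in\mathbb{R}^{K+I}$, and the right-hand side is $(B/T,\lambda)$.

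\emph{Step 1: choose a nondegenerate-dual basis.} Uniqueness of the dual optimum $(\mu^*,\nu^*)$ means that some optimal basis $\mathcal{B}$ has dual solution $(\mu^*,\nu^*)$. For every optimal primal extreme point $x^*$, complementary slackness with $\mu^*_k>0$ forces all $K$ capacity constraints to be tight at $x^*$. Since the dual is unique, I would argue that we can choose $\mathcal{B}$ so that every nonbasic reduced cost is strictly negative. If some nonbasic reduced cost were zero, one could either pivot or perturb the dual along a direction that stays dual feasible, which contradicts uniqueness. The careful version uses the fact that a unique dual optimum is equivalent to the existence of a strictly complementary primal optimum whose support spans the dual.

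\emph{Step 2: perturb the right-hand side.} For $b'$ near $B/T$, the basic solution is $x_{\mathcal{B}}(b') = A_{\mathcal{B}}^{-1}(b',\lambda)$. This equals $x^* + Q(b'-B/T)$, where $Q$ collects the first $K$ columns of $A_{\mathcal{B}}^{-1}$, restricted to the $x$-coordinates. The reduced costs do not depend on the right-hand side, so the basis stays dual feasible. It stays primal feasible as long as $x_{\mathcal{B}}(b')\ge 0$. This yields (iv), and also (i), because $R(b')=\mu^{*T}b'+\nu^{*T}\lambda$ is locally linear with gradient $\mu^*$.

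The difficulty is that $x^*$ may be degenerate: some basic variable can be zero, and the sign of $Q(b'-B/T)$ is then uncontrolled. To handle this, I would use the strict negativity of the reduced costs. The optimal dual solution stays $(\mu^*,\nu^*)$ for all nearby $b'$, because the dual feasible region does not depend on $b'$ and $(\mu^*,\nu^*)$ is its unique minimizer of a linear objective. It therefore remains optimal for objective coefficients $(b',\lambda)$ near $(B/T,\lambda)$. The primal optimum for $b'$ is then some basis whose dual is $(\mu^*,\nu^*)$. Picking it among the finitely many such bases and choosing $\epsilon$ small gives the representation in (iv) for some $Q$. Uniqueness in (ii) follows from strict complementarity: every primal optimum must be supported on the zero-reduced-cost columns, which form the basis. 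The primal solution is therefore unique whenever the dual is nondegenerate, and I would derive that nondegeneracy from the uniqueness of the dual.

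\emph{Step 3: the gap in (iii).} Every extreme point of $D'$ has the form $A_{\mathcal{B}'}^{-1}(b',\lambda)$ for some basis $\mathcal{B}'$, so its objective value is an affine function of $b'$ with bounded coefficients, and there are finitely many bases. At $b'=B/T$, every suboptimal extreme point has value at most $R^u-\Delta$. Each value, and also $R$, is Lipschitz in $b'$ with a constant $L$ that depends only on the basis inverses. Hence for $\|b'-B/T\|_\infty\le\epsilon\le\Delta/(3L)$, a suboptimal vertex has value at most $R(b')-\Delta+2L\epsilon\le R(b')-\Delta/3$.

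There is one subtlety. An extreme point of $D'$ could come from a basis that is infeasible at $B/T$, or from a degenerate vertex that splits. I would handle this by noting that such a vertex converges to a vertex of $D$ as $\epsilon\to 0$. If the limit vertex is optimal, the same basis yields the unique optimum from (ii). The main obstacle is Step 2, specifically making rigorous the passage from uniqueness of the dual to uniqueness and local stability of the primal optimum when the primal is degenerate.
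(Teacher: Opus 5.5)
Your route is essentially the paper's. The unique dual vertex persists under a small change of the dual objective, which correctly gives (i), since $R(b')=\mu^{*T}b'+\nu^{*T}\lambda$ near $B/T$. Basic solutions are affine in $b'$, and a finite-basis Lipschitz/triangle argument handles (iii).

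\textbf{The gap is in Step~1.} Uniqueness of the dual does \emph{not} give a basis whose nonbasic reduced costs are all nonzero; you have the correspondence reversed. A zero nonbasic reduced cost lets you pivot that column in at no cost. That produces an alternative \emph{primal} optimum and leaves the dual unchanged. Alternative dual optima come from primal degeneracy instead. What dual uniqueness does give, via Goldman--Tucker, is that the support of a strictly complementary primal optimum spans $\mathbb{R}^{K+I}$. That support can contain more than $K+I$ columns, so the zero-reduced-cost columns need not form a basis.

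Concretely, take $I=K=1$, $J=2$, $\lambda_1=1$, $a_{11}=a_{21}=1$, equal rewards $r>0$, and $B/T=\tfrac12$. Then $(\mu^*,\nu^*)=(r,0)$ is the unique dual optimum, with $\mu^*>0$. Yet $x_{11}$, $x_{12}$ and the demand slack all have zero reduced cost, and every $x\ge 0$ with $x_{11}+x_{12}=b'$ is optimal. So (ii) cannot be derived from Assumption~\ref{ass:1} at all. The paper's one-line ``(i) and (ii) follow'' has the same hole.

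\textbf{The same defect breaks Step~3 and (iii).} Your treatment of split degenerate vertices relies on (ii). Take $I=1$, $J=3$, $K=2$, $\lambda_1=1$. Product~1 uses one unit of resource~1, product~2 one unit of resource~2, and product~3 one unit of each. Let the rewards be $(r,r,2r)$ and $B/T=(\tfrac12,\tfrac12)$. The dual optimum $\mu^*=(r,r)$, $\nu^*=0$ is unique, and $\Delta=r/2$. But at $b'=(\tfrac12+\delta,\tfrac12+\delta)$, the extreme point $(\tfrac12+\delta,\tfrac12-\delta,0)$ of $D'$ is suboptimal by only $2r\delta$.

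\textbf{(iv) also fails in the form the paper needs.} In that example, no single $Q$ independent of $b'$ works for the optimal vertex $(\tfrac12,\tfrac12,0)$, yet the later use $x^*_t=x^*+Q\eta_t$ requires exactly that. The paper's argument for (iv) never checks $W^{*-1}b'\ge 0$, which is precisely the degeneracy issue you flagged.

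\textbf{The fix is an extra hypothesis:} the primal optimum $x^*$ must be unique. With both primal and dual optima unique, strict complementarity forces $x^*$ to be a nondegenerate basic solution whose nonbasic reduced costs are all nonzero. Your Steps~2--3 then go through as written. Take $Q$ fixed as the $x$-rows of the first $K$ columns of your $A_{\mathcal B}^{-1}$, and $\epsilon$ small enough that $A_{\mathcal B}^{-1}(b',\lambda)>0$ and $L\epsilon\le\Delta/3$.
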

\begin{proof}[Proof of Proposition \ref{prop:extrmempoints}:]
	Since \eqref{Key-optimization} has a unique dual-optimal solution $\mu^*$, and $b'$ is a small perturbation of $B/T$, for sufficiently small $\epsilon$, $\mu^*$ remains dual-optimal for \eqref{Key-optimization-D}.  
	Thus, (i) and (ii) follow. 
	
	To see (iii), let $(\mu^*,\lambda^*)$ be the optimal dual solution to \eqref{Key-optimization} and \eqref{Key-optimization-D}.  
	Let $r=U^T Y$.  Choose $\epsilon$ sufficiently small such that whenever $\|b'-B/T\|_\infty <\epsilon$, 
	$$r^T W^{-1}(b'-B/T) \leq \Delta/3$$ for all primal bases $W$ of $\eqref{Key-optimization}$.  
	Note that \eqref{Key-optimization} and \eqref{Key-optimization-D} share the same set of primal bases.  
	Let $W^*$ and $W'$ be the primal bases corresponding to $x$ and $x'$, respectively.  
	Then, since $r^T W^{*-1}B/T=R(B/T)$, we conclude that $r^T W^{*-1}b'$ is within $\Delta/3$ of $R(b')$, i.e., $$|r^T W'^{-1}b' - r^T W^{*-1} b| < \Delta/3.$$  
	Let $x''$ be any suboptimal extreme point of  \eqref{Key-optimization-D} and let $W''$ be its corresponding basis.   
	Suppose that $r^T (x'-x'') <\Delta/3$.  
	Then \begin{eqnarray}
		r^T W^{*-1} b  - r^T W''^{-1} b 
		&=&(r^T W^{*-1} b  - r^T W'^{-1} b') + (r^T W'^{-1} b' - r^T W''^{-1} b')\\ \nonumber
		&&  + (r^T W''^{-1} b' - r^T W''^{-1} b) \label{eq:triangle}\\ 
		\nonumber&< & \Delta/3 + \Delta/3 + \Delta/3 \\
		\nonumber&=&  \Delta.
	\end{eqnarray}
	In other words, $W''^{-1} b$ is an extreme point with objective within $\Delta$ of optimality for \eqref{Key-optimization}.  
	This contradicts the definition of $\Delta$.
	
	To see (iv), notice that in \eqref{eq:triangle}, as $\epsilon\rightarrow 0$, the left-hand side must remain bounded below by $\Delta$, yet $r^T W^{*-1} b - r^T W'^{-1} b'\rightarrow 0$, and $r^T W''^{-1} b' - r^T W''^{-1} b\rightarrow 0$.   
	Thus, $r^T W'^{-1} b' - r^T W''^{-1} b'$ must be bounded below by a positive number $\delta$. 
	Thus, for sufficiently small $\epsilon$, $|r^T W^{*-1} b'  - r^T W'^{-1} b'| \leq  |r^T W^{*-1} b' - r^T W^{*-1} b| + |r^T W^{*-1} b - r^T W'^{-1} b'| \le r^T W'^{-1} b' - r^T W''^{-1} b'$, for any suboptimal $x''=W''^{-1} b'$.  
	It follows that $W^{*-1} b'$ must be optimal for  \eqref{Key-optimization-D}, or $x'=W^{*-1} b'$.  
	In other words, 
	\begin{align*}
		x-x'=W^{*-1} b-W^{*-1} b'=W^{*-1} (b-b').
	\end{align*}
	Thus, (iv) follows by taking $Q=W^{*-1}$. 
\end{proof}

The following lemma shows how large we can choose $\epsilon$ in Proposition~\ref{prop:extrmempoints}.

\begin{lemma}
	\label{lemma:mu}
	The $\epsilon$ in Proposition~\ref{prop:extrmempoints} is at least in the order of $\Theta(\Delta)$.
	To be precise, $\epsilon$ can be chosen at least at $\frac{\Delta}{K C_{\mu}}$, where $C_{\mu}=\max\limits_{k} \{\min\limits_{i,j} \{\frac{U_i^T Y_j}{a_{jk}}\}\}$.
\end{lemma}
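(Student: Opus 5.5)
The plan is to revisit the proof of Proposition~\ref{prop:extrmempoints} and isolate exactly where smallness of $\epsilon$ was used. There is really only one requirement, in part (iii): for every primal basis $W$ of \eqref{Key-optimization}, the change in objective value of the associated basic solution, $r^T W^{-1}(b'-B/T)$, must be at most a constant fraction of $\Delta$. Parts (i), (ii) and (iv) were all derived from this same smallness condition, either directly or as a limiting version of it. So it suffices to bound $|r^T W^{-1}(b'-B/T)|$ linearly in $\|b'-B/T\|_\infty$, with slope at most $K C_\mu$.

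\textbf{Step 1: rewrite the sensitivity as a dual-price term.} For a basis $W$, the row vector $y_W^T = r_W^T W^{-1}$ is the basic dual price vector. Only the capacity right-hand sides move when $B/T$ is replaced by $b'$; the demand constraints $\sum_j x_{ij}\le\lambda_i$ stay fixed. Hence
\[
r^T W^{-1}(b'-B/T)=\sum_{k=1}^K \mu_{W,k}\,(b'_k-B_k/T),
\]
where $\mu_{W,k}$ is the component of $y_W$ on capacity constraint $k$. This immediately gives
\[
|r^T W^{-1}(b'-B/T)|\le K\max_k|\mu_{W,k}|\,\|b'-B/T\|_\infty .
\]

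\textbf{Step 2: bound the capacity prices by $C_\mu$.} I would use the basic dual equations. For each basic variable $x_{ij}$, complementary slackness gives
\[
U_i^T Y_j=\sum_k a_{jk}\mu_{W,k}+\eta_{W,i},
\]
where $\eta_{W,i}$ is the price on demand constraint $i$. For the optimal basis, dual feasibility gives $\mu^*\ge 0$ and $\eta^*\ge 0$. Then each $\mu^*_k$ is at most $U_i^T Y_j/a_{jk}$ for any basic pair $(i,j)$ that uses resource $k$. Taking the best such ratio yields $\mu^*_k\le C_\mu$.

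Once the prices are bounded, it remains to choose $\epsilon$. Requiring $K C_\mu\epsilon\le\Delta/3$ gives $\epsilon=\Theta(\Delta)$; specifically it gives $\Delta/(3KC_\mu)$. The statement's cleaner $\Delta/(KC_\mu)$ would then come from a sharper use of the triangle argument in (iii): only the optimal basis, whose prices are exactly $\mu^*$ by Assumption~\ref{ass:1}, and the competing basis actually enter. Alternatively, one can absorb the factor $3$ into the order statement.

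\textbf{Main obstacle.} The hardest step is Step 2 for suboptimal bases $W''$. Their dual prices need not be nonnegative, and they need not equal $\mu^*$, so the ratio bound does not follow from dual feasibility. My first attempt would avoid bounding arbitrary bases altogether: I would reorganize the argument in (iii) so that perturbation terms are measured only through $R(\cdot)$. Since $R$ is concave and piecewise linear, with supergradient $\mu^*$ near $B/T$, the optimal-value changes are controlled by $\mu^*$ alone, which is bounded by $C_\mu$ as above. The remaining $W''$ term would then be compared against $R(b')$ rather than against its own value at $B/T$.
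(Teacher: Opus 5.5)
Your plan is the paper's own: its appendix proof bounds every capacity price by $C_\mu$, writes $r^TW^{-1}(b'-B/T)=\mu^T(b'-B/T)\le KC_\mu\epsilon$, and solves for $\epsilon$. Along the way it drops the factor $3$ and treats $r^TW^{-1}$ for \emph{every} basis as such a $\mu$.

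The gap is the last sentence of your Step~2. Complementary slackness gives $\mu^*_k\le U_i^TY_j/a_{jk}$ only for pairs $(i,j)$ in the support of $x^*$ with $a_{jk}>0$. The best such ratio is a minimum over basic pairs, hence \emph{at least} $\min_{i,j}U_i^TY_j/a_{jk}$. Nothing compares it with $C_\mu=\max_k\min_{i,j}U_i^TY_j/a_{jk}$. What your argument actually yields (capacity $k$ binds since $\mu^*_k>0$) is $\mu^*_k\le\max_{(i,j):a_{jk}>0}U_i^TY_j/a_{jk}$.

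The paper reaches the all-pairs minimum only through the inequality $-U_i^TY_j+\sum_k\mu_ka_{jk}+\lambda_i\le 0$ for every $(i,j)$. That is dual feasibility with the sign reversed, so it cannot fill the step either. In fact $\mu^*_k\le C_\mu$ is false. Take $I=K=1$, $J=2$, $a_{11}=a_{21}=1$, $U_1^TY_1=1$, $U_1^TY_2=\tfrac{1}{10}$ and $B_1/T<\lambda_1$. Then $x^*=(B_1/T,0)$ and the dual is unique with $\mu^*_1=1>0$, while $C_\mu=\tfrac{1}{10}$.

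The same example shows that no rearrangement of constants rescues the stated radius. Its extreme points are $(0,0)$, $(B_1/T,0)$, $(0,B_1/T)$, so $\Delta=0.9B_1/T$ and $\Delta/(KC_\mu)=9B_1/T$. That radius admits $b'_1<0$, where $D'$ is empty. Moreover, set $\lambda_1=B_1/T+\gamma$ with $\gamma>0$ small. Every $b'_1>\lambda_1$ makes the demand constraint bind and the capacity price drop to $0$. So part (i) of Proposition~\ref{prop:extrmempoints} fails once $\epsilon>\gamma$, although $\Delta$ and every price bound are unchanged.

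This is what your opening reduction to a single requirement misses. Parts (i), (ii) and (iv) need the optimal basis to remain primal feasible, $W^{*-1}b'\ge 0$, which is governed by the slacks of the basic solution, not by $\Delta$ or dual prices. Part (iii) needs each competing basis $W''$ to give a feasible point at $B/T$ and to have bounded, generally non-dual-feasible, prices $r^TW''^{-1}$.

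Your concavity fallback does not address this: a supergradient of $R$ controls only the optimal value, not $r^TW''^{-1}b'$ for a suboptimal vertex. Absorbing the $3$ is harmless for an order statement but not for the stated constant. So neither your proposal nor the paper's argument establishes the lemma, and as written it is false. A valid radius must also depend on these primal margins.
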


In our online algorithm $ON$, the updated capacity vector $(b_{1t},b_{2t},\ldots,b_{kt})$ from the last period $t-1$, will define the feasible region $D_t$ in the re-optimization \eqref{equ:max-potential}. In order to apply Proposition~\ref{prop:extrmempoints} in $D_t$, we have the following definition.

\begin{definition}
	Fixing the $\epsilon=\frac{\Delta}{K C_{\mu}}$ in Lemma~\ref{lemma:mu} and Proposition~\ref{prop:extrmempoints}, and we call the region $D_t$ in  \eqref{equ:max-potential} is \emph{well-approximated} if the actual average remaining capacity is sufficiently close to the ideal average, i.e., $|b_{kt}-\frac{\bar{B_k}}{T}| \le \epsilon, \forall k\in[1,K].$  
\end{definition}

We denote $x^*_t$ as the optimal solution in the feasible region $D_t$ at time $t$ in the following way.
\begin{align}
	\label{Key-optimization-Dt}
	x^*_t =\argmax\limits_{x\in D_t} \sum_{i,j} x_{ij}  {U_i}^T Y_j,
\end{align}
where $D_t=\{x |\sum\limits_{i} \sum\limits_{j} x_{ij} a_{ik} \le  b_{kt}, \forall k\in[1,K]; \sum\limits_{j} x_{ij} \le \lambda_i, \forall i\in[1,I]; x \ge 0\}$. 
Then when $D_t$ is well-approximated, the assumptions of Proposition~\ref{prop:extrmempoints} apply, and we can claim that each optimal solution to \eqref{Key-optimization-Dt} has a unique corresponding solution to \eqref{Key-optimization}.
Without loss of generality, we fix an optimal solution $x^*$ to \eqref{Key-optimization}.
When $D_t$ is well-approximated, each optimal solution to \eqref{Key-optimization-Dt} has the closed-form $x^*_{t}= x^* + Q(b'-\frac{B}{T})$.  
We also know that for other extreme points to \eqref{Key-optimization-Dt}, the optimality gap is either zero or at least $\frac{\Delta}{3}$.

\subsection{Expected reward in re-optimization}
Note that the updating of $b$ in Online Algorithm $ON$ is 
\begin{align*}
	b_{kt+1}=b_{kt} -\frac{d_{kt}-b_{kt}}{T-t}.
\end{align*}
By induction, we can conclude that $$b_{kt+1}=\frac{B_k}{T}-\sum\limits_{s=1}^{t} \frac{d_{ks}-b_{ks}}{T-s}.$$

Let $$\eta_{kt}=b_{kt+1}-\frac{B_k}{T}=\sum\limits_{s=1}^{t} \frac{d_{ks}-b_{ks}}{T-s}$$
be the deviation in $b$ in each period and $\eta_{kt}$ are martingales.
Recall the $\epsilon$ in Lemma~\ref{lemma:mu} and define 
\begin{align}
	\label{equ:tau}
	\tau:= \inf\{t\ |\ \exists k\in [1,K]\ \textrm{ s.t. } \eta_{kt} > \epsilon\}.
\end{align}
Then $\tau$ is a stopping time.
Let $\rho_0=\sum\limits_{i,j} x^*_{ij} {U_i}^T Y_j=R^{u}$ be the optimal average reward and 
\begin{equation}
	\label{equ:R}
	\rho_{t+1}=\left\{ \begin{array} {ll}
		\sum\limits_{i,j} x^*_{ijt} {U_i}^T Y_j, & \forall t\in[1,\tau-1],\\
		\rho_{t}, &\quad \forall t\in[\tau,T].
	\end{array}\right.
\end{equation} 
It's easy to see that $\rho_{t}$ equals the reward under applying policy $x^*_t$ with constraint $D_t$ before the stopping time $\tau$ and remains same after $\tau$.
That means $\rho_t$ can be written as the form of $\sum\limits_{i,j} x_{ij} {U_i}^TY_j$ for all $t$.
Note that ${U_i}^TY_j \le 1$ hence we have a uniform bound $1$ on $\rho_t$ for all $t$.

By Proposition~\ref{prop:extrmempoints}, we can fix an optimal solution $x^*$ and a corresponding matrix $Q$ such that if $D_t$ is well-approximated, then the optimal solution to \eqref{Key-optimization-Dt} has the form $x^*_t=x^*+Q \eta_{t}$, where $\eta_{t}=(\eta_{1t},\eta_{2t},\cdots,\eta_{Kt})$.
Building on this fact, $\rho_{t+1}=\sum\limits_{i,j} x^*_{ijt} {U_i}^T Y_j=R^{u}+ \sum\limits_{i,j} (Q \eta_{t})_{ij} {U_i}^TY_j$ .
Since $\{\eta_{kt},t=1,2,\cdots\}$ are martingales for all $k=1,\ldots,K$, $\{\rho_{t},t=1,2,\cdots\}$ are also martingales, which leads to the following proposition.
\begin{proposition}
	\label{prop:extrmempoints-average}
	For any stopping time $\tau'$, $\E \rho_{\tau'}=R^{u}$.  
	It follows that $\sum\limits_{s=1}^T \rho_{s} = R^{u}T$.
\end{proposition}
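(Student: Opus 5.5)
The plan is to show that $\{\rho_t\}$ is a martingale with respect to the natural filtration $\mathcal{F}_t$ generated by arrivals and assignments up to period $t$, with $\rho_1=\rho_0=R^{u}$. Then optional stopping gives $\E\rho_{\tau'}=R^{u}$ for any bounded stopping time $\tau'$, and linearity of expectation gives the sum identity in expectation.

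First I will verify that each $\eta_{kt}$ is a martingale. We have $\eta_{kt}-\eta_{k,t-1}=\frac{d_{kt}-b_{kt}}{T-t}$. Here $b_{kt}$ is $\mathcal{F}_{t-1}$-measurable, and $\E[d_{kt}\mid\mathcal{F}_{t-1}]=b_{kt}$ by the construction in step 5 of $ON$. Indeed, type $i$ arrives with probability $\lambda_i$ and is assigned to $j$ with probability $x_{ijt}/\lambda_i$. So the expected consumption is $\sum_{i,j}x_{ijt}a_{jk}$, which equals $b_{kt}$ whenever the capacity constraint is tight. This is the case for $t<\tau$: the constraint is tight at $x^*_t$ by Assumption~\ref{ass:1} (strictly positive duals $\mu^*_k$ imply complementary slackness binding), and this is preserved by Proposition~\ref{prop:extrmempoints}(i). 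I will take this as given in the paper's convention, since $d_{kt}$ is defined so that its mean equals $b_{kt}$.

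Second, on $\{t<\tau\}$ every $D_t$ is well-approximated. By Proposition~\ref{prop:extrmempoints}(iv), $\rho_{t+1}=R^{u}+r^TQ\eta_t$ with $r=(U_i^TY_j)_{ij}$, which is an affine function of $\eta_t$ with deterministic coefficients. Hence the stopped process $\rho_{t\wedge\tau}$ is the affine image of the stopped martingale $\eta_{t\wedge\tau}$, and is itself a martingale. The definition \eqref{equ:R} freezes $\rho$ after $\tau$, so $\rho_t$ coincides with this stopped process. One point needs care: $\tau$ as defined controls only upward deviations $\eta_{kt}>\epsilon$, while well-approximation needs $|\eta_{kt}|\le\epsilon$. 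I would read $\tau$ as the first exit of $|\eta_{kt}|$ from $[-\epsilon,\epsilon]$ (equivalently, apply the argument with the absolute value), so that Proposition~\ref{prop:extrmempoints} applies up to $\tau-1$. I expect this to be the main obstacle, together with checking that the affine representation holds at the exact index shift between $\rho_{t+1}$ and $\eta_t$.

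Finally, $|\rho_t|\le 1$ and every stopping time is bounded by $T$, so the optional stopping theorem gives $\E\rho_{\tau'}=\E\rho_1=R^{u}$. Applying this to the deterministic times $\tau'=s$ and summing gives $\E\sum_{s=1}^T\rho_s=R^{u}T$. This is the intended meaning of the second claim; the pathwise identity cannot hold in general, so I will state it in expectation.
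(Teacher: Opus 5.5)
You follow the paper's own route: Proposition~\ref{prop:extrmempoints}(iv) makes $\rho$ affine in $\eta$ on the well-approximated event, $\eta$ is a martingale, $\rho$ is frozen after $\tau$, and optional stopping finishes. You are more careful than the paper in making $\tau$ two-sided and in reading $\sum_s\rho_s=R^{u}T$ in expectation. But the two points you flag and then wave through are genuine gaps, and the paper's argument (the remark preceding the proposition) has the same two holes.

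First, the martingale property of $\eta$. The consumption $d_{kt}$ is generated by the optimistic solution $x_t$ of \eqref{equ:max-potential}. So $\E[d_{kt}\mid\mathcal{F}_{t-1}]\le\sum_{i,j}x_{ijt}a_{jk}\le b_{kt}$, where the first inequality accounts for capacity-driven rejections. Your tightness argument is about $x^*_t$, the optimizer of the true-reward LP \eqref{Key-optimization-Dt}. Assumption~\ref{ass:1} and Proposition~\ref{prop:extrmempoints} say nothing about which constraints bind at $x_t$. That point differs from $x^*_t$ whenever $g_t>0$, and a suboptimal vertex of $D_t$ can leave capacity slack. The paper simply declares that $d_{kt}$ has mean $b_{kt}$, which does not follow from \eqref{equ:realization_D}. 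You must either state this as an extra hypothesis or settle for an inequality. In the latter case, note that unrolling \eqref{equ:update_D} gives $b_{k,t+1}-B_k/T=-\sum_{s\le t}(d_{ks}-b_{ks})/(T-s)$; the paper's expression for $\eta_{kt}$ drops this minus sign. So the capacity deviation $b_{t+1}-B/T$ is a submartingale.

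Second, the freezing time. Since $D_t$ is governed by $b_t=B/T+\eta_{t-1}$, \eqref{equ:R} gives $\rho_{t+1}=R^{u}+r^{\top}Q\,\eta_{t-1}$ for $t\le\tau-1$. It then freezes $\rho$ at the value built from $\eta_{\tau-2}$ (from $\eta_{\tau-1}$ with the paper's indexing). Neither $\tau-2$ nor $\tau-1$ is a stopping time. For instance, with $\Delta\eta_t:=\eta_t-\eta_{t-1}$, one has $\eta_{t\wedge(\tau-1)}-\eta_{(t-1)\wedge(\tau-1)}=\Delta\eta_t\,\mathbf{1}\{\tau>t\}$. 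On $\{\tau>t-1\}$ the event $\{\tau>t\}$ depends on $\Delta\eta_t$ itself, since a large step is what triggers $\tau=t$. So this increment has nonzero conditional mean in general.

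Hence your claim that $\rho_t$ coincides with the affine image of $\eta_{t\wedge\tau}$ is false, and for $\rho$ as defined in \eqref{equ:R} the identity $\E\rho_{\tau'}=R^{u}$ can actually fail. Freezing at $\tau$ instead does not help: $b_{\tau+1}=B/T+\eta_\tau$ lies outside the $\epsilon$-ball where Proposition~\ref{prop:extrmempoints}(iv) identifies the affine expression with the LP value.

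The clean repair is to drop the freezing altogether: put $\hat\rho_{t+1}:=R^{u}+\mu^{*\top}(b_t-B/T)$ for every $t$. By Proposition~\ref{prop:extrmempoints}(i),(iv) this equals $R^{u}+r^{\top}Q(b_t-B/T)$, so it coincides with $\rho_{t+1}$ for $t\le\tau-1$. It is a martingale whenever $\eta$ is. Optional stopping at bounded stopping times then needs only integrability, not $|\rho_t|\le 1$. It gives $\E\hat\rho_{\tau'}=R^{u}$ under the hypothesis of the first point, and only $\E\hat\rho_{\tau'}\ge R^{u}$ without it.
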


\section{Regret of the Online Algorithm}
\label{sec:Regret}
In this section, we will analyze the regret of the online algorithm.
The main theorem of this paper is stated below.
\begin{theorem}
	\label{thm:main}
	The regret of the online algorithm $ON$ is at most $1+I+\frac{K^3 C^2 C_{\mu}^2}{\Delta^2}+\frac{K^3 C^2 C_{\mu}^2}{\Delta^2} \ln(T)+\frac{252IJ^2}{\Delta} \ln^3(T)$. 	
\end{theorem}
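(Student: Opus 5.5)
We will bound the total regret $R^{u}T - \E\sum_{t=1}^T r_t$, using Proposition~\ref{prop:upperBound} for the benchmark. By Proposition~\ref{prop:extrmempoints-average}, $R^{u}T=\E\sum_{t=1}^T\rho_t$. The regret therefore splits into per-period gaps $\E[\rho_t - U_{i(t)}^TY_{j(t)}]$. We will bound these gaps using three bad events and one learning term:
\begin{itemize}
\item[(a)] the event that $U\notin \textrm{CB}_t$ for some $t$;
\item[(b)] the event that the stopping time $\tau$ of \eqref{equ:tau} occurs before $T$, i.e.\ $D_t$ stops being well-approximated;
\item[(c)] rejections due to insufficient remaining capacity;
\item[(d)] on the good event, the gap between the UCB solution $x_t$ of \eqref{equ:max-potential} and the true optimum $x^*_t$ of \eqref{Key-optimization-Dt}.
\end{itemize}
Per-period rewards are bounded by $1$, so each bad event contributes at most its probability times $T$.

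\emph{Step 1 (confidence balls).} We will use a self-normalized martingale bound for least squares, in the style of Abbasi-Yadkori et al., applied separately to each type $i$, with a union bound over $i$ and $t$. With the choice \eqref{equ:beta} of $\beta_t$, this gives $P(\exists t: U\notin\textrm{CB}_t)\le\delta$. We then take $\delta=1/T$, so event (a) contributes $O(1)$; this is the leading constant $1$ in the bound.

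\emph{Step 2 (capacity deviation).} The process $\eta_{kt}=\sum_{s\le t}(d_{ks}-b_{ks})/(T-s)$ is a martingale with increments bounded by $C/(T-s)$. Its quadratic variation is at most $C^2\sum_s (T-s)^{-2}=O(C^2)$, except for the last few periods. Azuma--Hoeffding, or the maximal Doob inequality, with threshold $\epsilon=\Delta/(KC_\mu)$ and a union bound over $k$ then gives that $\tau$ occurs before $T-m$ with probability of order $K\cdot K^2C^2C_\mu^2/(\Delta^2 m)$. Summing the resulting loss over the final stretch of periods (a harmonic sum) should give the terms $\frac{K^3C^2C_\mu^2}{\Delta^2}(1+\ln T)$. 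Event (c) is handled the same way: remaining capacity is $(T-t+1)b_{kt}$, and it stays positive while $\eta$ is small and $b_k>\epsilon$. Rejections can therefore only happen after $\tau$ or in the last $O(1)$ periods, and these periods are already charged to the same term.

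\emph{Step 3 (learning loss, the main obstacle).} On the good event with $t<\tau$, Proposition~\ref{prop:extrmempoints}(iii) applies. The LP in \eqref{equ:max-potential} returns an extreme point of $D_t$, so either $x_t$ is optimal, giving zero expected loss beyond $\rho_t$ since $\E[U_{i(t)}^TY_{j(t)}\mid\mathcal F_{t-1}]=\sum x_{ijt}U_i^TY_j$, or it is suboptimal by at least $\Delta/3$. In the suboptimal case, optimism gives
\[
\Delta/3\le \sum_{ij}x_{ijt}(V_i-U_i)^TY_j\le 2\beta_t\sum_{ij}x_{ijt}\|Y_j\|_{M_{it}^{-1}}.
\]
We will convert this into a gap-dependent bound in the usual way. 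The loss in such a period is at most $\frac{3}{\Delta}\bigl(2\beta_t\sum x_{ijt}\|Y_j\|_{M_{it}^{-1}}\bigr)^2$, since $g\le g^2/(\Delta/3)$ whenever $g\ge \Delta/3$. Taking expectations over the arrival type and assignment, we will sum the realized $\min(1,\|Y_{j(t)}\|^2_{M_{i(t)t}^{-1}})$ via the elliptical potential lemma, which gives $\le 2J\ln(1+T/J)$ for each $i$. This yields a learning loss of order $\frac{I}{\Delta}\beta_T^2 J\ln T$. The difficult part is matching this to $\frac{252IJ^2}{\Delta}\ln^3T$ rather than getting $\ln^5 T$. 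We will handle it by using the squared confidence width $\beta_t$ as the relevant radius, so that the bound scales as $\beta_t\cdot J\ln T$ with $\beta_t\approx 21J\ln^2 T$, and we will track the constants $21\cdot 6\cdot 2=252$. Also hard is the Jensen step that replaces the fractional $x_{ijt}$ by the realized assignment; we expect to use a martingale-difference argument in expectation.

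\emph{Step 4 (assembly).} Adding the contributions: $1$ from (a), $I$ from initialization effects, namely $M_{i1}=I_{J\times J}$ and the first arrival of each type, the two $\frac{K^3C^2C_\mu^2}{\Delta^2}$ terms from (b) and (c), and the $\frac{252IJ^2}{\Delta}\ln^3 T$ learning term from (d) gives the bound claimed in Theorem~\ref{thm:main}.
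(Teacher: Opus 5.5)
Your plan follows essentially the same route as the paper. You split the regret at the stopping time $\tau$ and bound $\E[T-\tau]$ by a Kolmogorov/Doob maximal inequality with $\epsilon=\Delta/(KC_\mu)$ plus a harmonic sum. On the confidence event you bound the pre-$\tau$ loss using optimism with width $2\sqrt{\beta_t}\,\|Y_j\|_{M_{it}^{-1}}$, then the $\Delta/3$ gap to get $g_s\le 3g_s^2/\Delta$, then Cauchy--Schwarz and $\det(M_{it})\le t^J$. As you noticed, that width is what the paper's Proposition~\ref{prop:error_u} effectively uses, i.e.\ a ball of radius $\sqrt{\beta_t}$ rather than the radius $\beta_t$ written in the algorithm.

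The only discrepancy is how you account for the constants. In the paper the additive $1$ comes from the boundary term in the bound on $\E[T-\tau]$, and the $I$ comes from the failure event with $\delta=I/T$, which contributes $\delta T=I$. Neither comes from an initialization effect, so your choice $\delta=1/T$ just gives a constant of at most $2\le 1+I$.
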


To analyze the regret, We first define a pseudo regret as the difference in the expected reward between the ideal solution $x_t^*$ and the approximate solution $x_t$ in each period $t$. 
Actually, the pseudo regret is only coming from the deviation in $b$.

\subsection{Bounding the total regret by deviation}
\begin{definition}
	Define the pseudo regret $g_t$ in each period $t$ as
	\begin{align*}
		g_t&=\sum\limits_{i,j} (x_t^*)_{ij}  {U_i}^T Y_j- \sum\limits_{i,j} (x_t)_{ij} {U_i}^T Y_j \\
		&=\sum\limits_{i,j} x^*_{ijt}  {U_i}^T Y_j- \sum\limits_{i,j} x_{ijt} {U_i}^T Y_j.
	\end{align*}
\end{definition}

Note that the pseudo regret $g_t$ is the difference in expected reward in period $t$ between the ideal solution $x_t^*$ and the approximate solution $x_t$, both coming from the feasible region $D_t$. 
By the definition of $\tau$ in \eqref{equ:R}, after the stopping time $\tau$, the optimal solution to \eqref{Key-optimization-Dt} at $\tau$ is $x^*_{ij\tau}$. 
So $\rho_{\tau} = \sum\limits_{i,j} x^*_{ij\tau} {U_i}^TY_j \le \sum\limits_{i,j} x^*_{ij\tau}=1$, and $\rho_t=\rho_{\tau} \le 1$, for all $t >\tau$.
Before the stopping time $t \le \tau$, $D_t$ is well-approximated.  Now depending on $t \le \tau$ or $t > \tau$, we can bound the regret in the following proposition.
\begin{proposition}
	\label{prop:total-loss}
	The regret of the online algorithm $ON$ is bounded by $\E \sum\limits_{s=1}^{\tau} g_s + \E [T-\tau] $.
\end{proposition}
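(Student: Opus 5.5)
We bound the regret $R^{u}T-\E\sum_{t=1}^T \sum_{i,j} x_{ijt}U_i^TY_j$ against the benchmark of Proposition~\ref{prop:upperBound}. This reduces the problem to comparing $\rho_t$ with the reward actually collected. The first step is to rewrite the benchmark using Proposition~\ref{prop:extrmempoints-average}. Since $\{\rho_t\}$ is a martingale with $\E\rho_{t}=R^{u}$ for every (deterministic, hence stopping) time $t$, we have $R^{u}T=\E\sum_{t=1}^T\rho_t$. The regret therefore equals $\E\sum_{t=1}^T\bigl(\rho_t-\textrm{(expected reward of $ON$ in period $t$)}\bigr)$. We may ignore rejections due to insufficient resources before $\tau$: when $D_t$ is well-approximated, the remaining average capacity stays strictly positive. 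Any residual stockout events are lumped into the post-$\tau$ term below.

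Next we split the sum at the stopping time $\tau$ of \eqref{equ:tau}. For $t\le\tau$, the definition \eqref{equ:R} gives $\rho_t=\sum_{i,j}x^*_{ij,t-1}U_i^TY_j$. This differs from $\sum_{i,j}x^*_{ijt}U_i^TY_j$ only by an index shift, which we handle by re-indexing (the shift contributes at most a constant, since each term is bounded by $1$). The conditional expected reward of $ON$ in period $t$ is exactly $\sum_{i,j}x_{ijt}U_i^TY_j$, because a type-$i$ customer arrives with probability $\lambda_i$ and is routed to $j$ with probability $x_{ijt}/\lambda_i$. Hence the contribution of $t\le\tau$ is $\E\sum_{s=1}^{\tau}g_s$. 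To make this rigorous we use the tower property: $\id{s\le\tau}$ is $\mathcal{F}_{s-1}$-measurable, so we can condition inside the sum before taking expectations.

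For $t>\tau$, we drop the reward of $ON$, since it is nonnegative because all $U_i^TY_j$ here are normalized to be at most $1$ and the null product gives $0$. We then use the uniform bound $\rho_t=\rho_\tau\le 1$ established just before the proposition. This gives at most $\E[T-\tau]$. Adding the two pieces yields the claim.

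The main obstacle is the bookkeeping at $t\le\tau$. Two points need care. First, the interplay between $\rho_{t+1}$ being defined from $x^*_t$ and the regret in period $t$ using $x_t$ must be aligned so that no extra terms appear. Second, replacing realized rewards by $\sum x_{ijt}U_i^TY_j$ requires the zero-mean noise $\xi_t$ and the optional-stopping argument to apply to the random horizon $\tau$. We would first align indices by writing $\sum_{t=1}^{T}\rho_t=\rho_1+\sum_{t=1}^{T-1}\rho_{t+1}$ and absorb the boundary term into the constant. Optional stopping is justified because $\tau\le T$ is bounded and the increments are bounded.
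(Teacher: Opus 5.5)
Your decomposition is the paper's own. You use $R^uT=\E\sum_{t=1}^T\rho_t$ from Proposition~\ref{prop:extrmempoints-average}, split at $\tau$, drop $ON$'s reward after $\tau$ and bound $\rho_t=\rho_\tau\le 1$ there, and identify the block $t\le\tau$ with $\E\sum_{s\le\tau}g_s$.

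The gap, small but real, is your treatment of the index shift. If you read \eqref{equ:R} literally, so that $\rho_t=\sum_{i,j}x^*_{ij,t-1}U_i^TY_j$ for $2\le t\le\tau$, your re-indexing yields at best
\[
R^uT-\E\sum_{t=1}^{T}\sum_{i,j}x_{ijt}U_i^TY_j\le\E\rho_1+\E\sum_{s=1}^{\tau-1}g_s+\E[T-\tau].
\]
The leftover $\E\rho_1=R^u$ cannot be traded for $g_\tau$. That would require $R^u\le\E g_\tau$, which fails in general, for instance when $g_\tau=0$. So you prove the claim only up to an additive constant, and the proposition has no constant to absorb it.

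The remedy is not re-indexing but the reading of $\rho$ used elsewhere in the paper: for $s\le\tau$, $\rho_s$ is the true optimal value over $D_s$, namely $\rho_s=\sum_{i,j}x^*_{ijs}U_i^TY_j$. The paper's proof uses this reading, and so does the remark $\rho_\tau=\sum_{i,j}x^*_{ij\tau}U_i^TY_j$ made just before the proposition. It also matches the martingale formula $\rho_{t+1}=R^u+\sum_{i,j}(Q\eta_t)_{ij}U_i^TY_j$, because $D_{t+1}$ is defined by $b_{t+1}=B/T+\eta_t$; the display \eqref{equ:R} is off by one. With this reading, your tower-property step gives the block $t\le\tau$ exactly as $\E\sum_{s=1}^{\tau}g_s$, with no boundary term.

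Your paragraph on stockouts should simply be dropped. You define the regret through $\sum_{i,j}x_{ijt}U_i^TY_j$, as the paper does, so rejections never enter. The justification you give is also false. The update rule gives $b_{kt}=B_{kt}/(T-t+1)$, so a well-approximated $D_t$ only guarantees a remaining capacity of at least $(T-t+1)(B_k/T-\epsilon)$. This can be below $a_{jk}$ in the last few periods, so rejections can occur at $t\le\tau$ and are not covered by the post-$\tau$ term. Accounting for them would cost another $O(1)$ that the statement does not allow.

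Finally, dropping $ON$'s reward after $\tau$ needs $U_i^TY_j\ge0$, which is implicit in the paper. The normalization $U_i^TY_j\le 1$ that you cite does not give this.
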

\begin{proof}[Proof of Proposition \ref{prop:total-loss}:]
	As noted in Proposition \ref{prop:upperBound}, the optimal reward of any algorithm is bounded by $R^{u}T$.   
	Then the difference in rewards between the online policy $ON$ and an optimal policy is bounded above by 
	\begin{eqnarray}
		R^{u}T - \E \sum\limits_{s=1}^{T} \sum\limits_{i,j} x_{ijt} {U_i}^TY_j  \nonumber
		&\le& \E \sum\limits_{s=1}^T \rho_{s} - \E \sum\limits_{s=1}^{\tau} \sum\limits_{i,j} x_{ijt} {U_i}^TY_j  \\\nonumber
		&=& \E \sum\limits_{s=1}^{\tau} \rho_{s} - \E \sum\limits_{s=1}^{\tau} \sum\limits_{i,j} x_{ijt} {U_i}^TY_j   +\E \sum\limits_{\tau+1}^T \rho_{s} \\\nonumber
		&\le& \E \sum\limits_{s=1}^{\tau} (\sum\limits_{i,j} x^*_{ij} {U_i}^TY_j - \sum\limits_{i,j} x_{ijt}  {U_i}^T Y_j) +\E \sum\limits_{\tau+1}^T \rho_{s} \\\nonumber
		&\le &\E \sum\limits_{s=1}^{\tau} g_s + \E [T-\tau]\nonumber.
	\end{eqnarray} 
\end{proof}
Then in the following two subsections~\ref{subsec:learn} and \ref{subsec:resolve}, we will bound each term in Proposition~\ref{prop:total-loss}, respectively. 

\subsection{Regret on the estimator of $U$}
\label{subsec:learn}
The following proposition states that if $U$ is in $\textrm{CB}_{t}$, then the difference in reward for each product is also small at time $t$.
\begin{proposition}
	\label{prop:error_u}
	If $U \in \textrm{CB}_{t}$, then $|(\hat{U}_{it}-U_i)^T Y| \le \sqrt{\beta_t Y^T M_{it}^{-1} Y}$ for any vector $Y$, $i=1,\ldots,I$, $t=1,\ldots,T$.  
\end{proposition}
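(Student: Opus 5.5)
The plan is to prove Proposition~\ref{prop:error_u} by a single application of the Cauchy--Schwarz inequality in the geometry induced by $M_{it}$. Fix $i$ and $t$. By construction $M_{it}=I_{J\times J}+\sum_{s<t}\id{i(s)=i}Y_{j(s)}Y_{j(s)}^T$, so $M_{it}$ is symmetric positive definite. Hence it has a symmetric square root $M_{it}^{1/2}$ with a symmetric inverse $M_{it}^{-1/2}$. For any vector $Y$ and $w=\hat{U}_{it}-U_i$, I will write
\begin{align*}
	(\hat{U}_{it}-U_i)^T Y = \bigl(M_{it}^{1/2} w\bigr)^T \bigl(M_{it}^{-1/2} Y\bigr).
\end{align*}
The ordinary Euclidean Cauchy--Schwarz inequality then gives
\begin{align*}
	|(\hat{U}_{it}-U_i)^T Y| \le \|M_{it}^{1/2} w\|\,\|M_{it}^{-1/2}Y\| = \|w\|_{2,M_{it}}\sqrt{Y^T M_{it}^{-1} Y}.
\end{align*}

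The second step is to feed in the hypothesis $U\in \textrm{CB}_t$. Since $\textrm{CB}_t=\prod_i \textrm{CB}_{it}$, this hypothesis means $U_i\in\textrm{CB}_{it}$ for every $i$. The only delicate point, and the main thing to get right, is the normalization of the radius.

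To obtain exactly the factor $\sqrt{\beta_t}$ claimed in the statement, I will read the confidence ball~\eqref{equ:ConfidenceBall} in the squared form $\|v-\hat{U}_{it}\|_{2,M_{it}}^2\le\beta_t$. This is the standard convention for such confidence ellipsoids (as in the linear-bandit literature the choice of $\beta_t$ in~\eqref{equ:beta} is modeled on). It gives $\|w\|_{2,M_{it}}\le\sqrt{\beta_t}$, and substituting into the display above yields the claim.

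If instead the unsquared radius is taken literally, the same argument gives the bound $\beta_t\sqrt{Y^TM_{it}^{-1}Y}$. Since $\beta_t\ge 1$, this weaker bound does not imply the stated one. I would therefore state explicitly that the squared convention is used, and check that it is the convention under which the later high-probability statement ``$U\in\textrm{CB}_t$'' is proved.

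Everything else is routine. Positive definiteness holds because $M_{i1}=I_{J\times J}$ and each update in~\eqref{equ:update_M} adds a positive semidefinite rank-one term. Neither $Y$ nor the index $i$ is restricted, so the bound holds uniformly over all $i=1,\ldots,I$ and all vectors $Y$, in particular for every product feature vector $Y_j$.
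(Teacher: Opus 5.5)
Your proposal is correct and is exactly the paper's argument: insert $M_{it}^{1/2}M_{it}^{-1/2}$, apply Euclidean Cauchy--Schwarz, and invoke $U\in\textrm{CB}_t$. You are also right about the radius: the paper's final step silently uses $\|\hat{U}_{it}-U_i\|_{2,M_{it}}\le\sqrt{\beta_t}$. That is the squared convention, which matches the radius $\sqrt{\beta_t}$ used for $\textrm{CB}_{it}$ in the appendix UCB algorithm, rather than the literal radius $\beta_t$ in \eqref{equ:ConfidenceBall}, so stating that convention explicitly is the right fix.
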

\begin{proof}[Proof of Proposition \ref{prop:error_u}:]
	In this proof, we omit the superscript $i$ for simplicity. 
	\begin{eqnarray}
		|(\hat{U}_t-U)^T Y|\nonumber &=&|(\hat{U}_t-U)^T M_t^{1/2} M_t^{-1/2} Y|\\\nonumber
		&=&|(M_t^{1/2}(\hat{U}_t-U))^T  M_t^{-1/2} Y|\\\nonumber
		&\le&||(M_t^{1/2}(\hat{U}_t-U))^T|| \cdot || M_t^{-1/2} Y||\\\nonumber
		&\le&\sqrt{\beta_t Y^T M_t^{-1} Y}. \nonumber
	\end{eqnarray}
	Since $U$ is in $\textrm{CB}_{t}$, the last line holds. 
\end{proof}

Then, the following proposition shows that the pseudo  regret $g_t$ can be bounded by a function of the decision $x_t$ alone.
\begin{proposition}
	\label{prop:regret-g}  
	If $U \in \textrm{CB}_{t}$, the incremental regret $g_t$ satisfies
	\begin{align*}
		g_t\le 2 \sum\limits_{i,j} x_{ijt} \sqrt{\beta_t} \min(\sqrt{Y_j^T M_{it}^{-1} Y_j},1).
	\end{align*}  
\end{proposition}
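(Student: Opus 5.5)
This is the standard optimism argument for UCB. Let $\tilde V_t\in \textrm{CB}_t$ denote the inner maximizer attached to $x_t$ in \eqref{equ:max-potential}. Since $x_t$ maximizes $\max_{V\in\textrm{CB}_t}\sum_{i,j}x_{ij}V_i^TY_j$ over $D_t$, and $x^*_t\in D_t$, we have
\begin{align*}
\sum_{i,j} x_{ijt}\tilde V_{it}^TY_j \;\ge\; \max_{V\in \textrm{CB}_t}\sum_{i,j} x^*_{ijt} V_i^TY_j \;\ge\; \sum_{i,j} x^*_{ijt} U_i^TY_j .
\end{align*}
The last inequality uses the hypothesis $U\in\textrm{CB}_t$. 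It follows that
\begin{align*}
g_t\le \sum_{i,j} x_{ijt}(\tilde V_{it}-U_i)^TY_j .
\end{align*}
This first step removes $x^*_t$ completely, so the remaining bound depends only on the decision $x_t$.

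Next, I will bound each term by splitting through the center of the ball:
\begin{align*}
(\tilde V_{it}-U_i)^TY_j = (\tilde V_{it}-\hat U_{it})^TY_j + (\hat U_{it}-U_i)^TY_j .
\end{align*}
Both $\tilde V_{it}$ and $U_i$ lie in $\textrm{CB}_{it}$. Applying Proposition~\ref{prop:error_u} to each of them (its Cauchy--Schwarz argument works for any point of the ball) bounds each piece by $\sqrt{\beta_t Y_j^TM_{it}^{-1}Y_j}$. This gives
\begin{align*}
g_t\le 2\sum_{i,j}x_{ijt}\sqrt{\beta_t}\sqrt{Y_j^TM_{it}^{-1}Y_j}.
\end{align*}

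Finally, I need to insert the $\min(\cdot,1)$. I do this by bounding each term $(\tilde V_{it}-U_i)^TY_j$ a second way, by a constant. The true rewards satisfy $|U_i^TY_j|\le1$. For the optimistic reward, I would either note that it may be clipped at $1$ without changing the optimization, since truncating the UCB index at the known upper bound $1$ keeps optimism valid, or use $\beta_t\ge1$. The latter holds because $\beta_t\ge(1.5\ln(It^2/\delta))^2\ge1$ once $\delta$ is small. With $\beta_t\ge1$, we get $2\sqrt{\beta_t}\ge 2\ge |\tilde V_{it}^TY_j - U_i^TY_j|$ provided the optimistic value lies in $[-1,1]$. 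Taking the smaller of the two bounds term by term then yields the stated inequality.

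I expect this last step to be the main obstacle. The ball $\textrm{CB}_{it}$ is not a priori bounded in Euclidean norm, so $\tilde V_{it}^TY_j$ is not automatically bounded by $1$. If clipping is not allowed, I would instead rely on $M_{it}\succeq I_{J\times J}$, which holds by the initialization $M_{i1}=I$. This gives $Y_j^TM_{it}^{-1}Y_j\le\|Y_j\|^2\le1$, so the minimum with $1$ is attained by the first argument and the inequality with $\min(\cdot,1)$ is exactly the one already derived.
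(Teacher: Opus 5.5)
Your argument is correct, and its core is the paper's proof. The optimism step $\sum_{i,j}x^*_{ijt}U_i^TY_j\le\sum_{i,j}x_{ijt}\tilde V_{it}^TY_j$ (using $x^*_t\in D_t$ and $U\in\textrm{CB}_t$), the split through $\hat U_{it}$, and Proposition~\ref{prop:error_u} applied to both $\tilde V_{it}$ and $U_i$ are exactly what the paper does.

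You differ on the $\min(\cdot,1)$, and there your route is the sound one. The paper uses the global bound $g_t\le\sum_{i,j}(x^*_{ijt}+x_{ijt})\le 2\le 2\sqrt{\beta_t}$. Combined with the first bound, this only gives $g_t\le 2\sqrt{\beta_t}\min\bigl(\sum_{i,j}x_{ijt}\sqrt{Y_j^TM_{it}^{-1}Y_j},\,1\bigr)$. That does not imply the termwise minimum claimed, which is the smaller quantity because $\sum_{i,j}x_{ijt}\le 1$. Your observation that $M_{it}\succeq I_{J\times J}$ gives $Y_j^TM_{it}^{-1}Y_j\le\|Y_j\|^2\le 1$. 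So the minimum always equals its first argument, and the stated inequality follows directly from the bound you already derived.

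Lead with that argument and drop the clipping/$\beta_t\ge 1$ route. As you noticed, $\tilde V_{it}^TY_j$ is not a priori bounded, and clipping the index would change the algorithm being analyzed.

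One caveat is inherited from the paper. $\textrm{CB}_{it}$ is defined with radius $\beta_t$ in the $M_{it}$-norm, and for that radius Cauchy--Schwarz gives $\beta_t\sqrt{Y^TM_{it}^{-1}Y}$, not $\sqrt{\beta_t\,Y^TM_{it}^{-1}Y}$. The stated form of Proposition~\ref{prop:error_u}, and your extension of it to $\tilde V_{it}$, matches only radius $\sqrt{\beta_t}$, which is the radius the appendix UCB algorithm uses.
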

\begin{proof}[Proof of Proposition \ref{prop:regret-g}:]
	Let $\hat{V}$ be the inner maximizer of \eqref{equ:max-potential} in period $t$.  Then $\hat{V}_i \in \textrm{CB}_{it}, i=1,2,\cdots,I$. 
	We have
	\begin{align*}
		\sum\limits_{i,j} x^*_{ijt} {U_i}^T Y_j \le \sum\limits_{i,j} x_{ijt} \hat{V}_i^T Y_j.
	\end{align*}
	Moreover,
	\begin{eqnarray}
		\sum\limits_{i,j} x^*_{ijt} {U_i}^T Y_j - \sum\limits_{i,j} x_{ijt} {U_i}^T Y_j\nonumber 
		&\le& \sum\limits_{i,j}  x_{ijt} {\hat{V}_i}^T Y_j - \sum\limits_{i,j}  x_{ijt} {U_i}^T Y_j\\\nonumber
		&=&\sum\limits_{i,j}  x_{ijt} {\hat{V}_i}^T Y_j - \sum\limits_{i,j}  x_{ijt} \hat{U}_{it}^T Y_j \\\nonumber
		& & +\sum\limits_{i,j}  x_{ijt} \hat{U}_{it}^T Y_j-\sum\limits_{i,j}  x_{ijt} {U_i}^T Y_j \\\nonumber
		&\le&\sum\limits_{i,j}  x_{ijt} |\hat{V}_i-\hat{U}_{it}|^T Y_j +\sum\limits_{i,j}  x_{ijt} |\hat{U}_{it}-{U_i}|^T Y_j \\\nonumber
		&\le& 2 \sum\limits_{i,j}  x_{ijt} \sqrt{\beta_t} \sqrt{Y_j^T M_{it}^{-1} Y_j}. \nonumber
	\end{eqnarray}
	Since $U_i$ and $\hat{V}_i$ are in ${CB}_{t}$, the last inequality holds.
	
	Note that $g_t=\sum\limits_{i,j} x^*_{ijt}  {U_i}^T Y_j- \sum\limits_{i,j} x_{ijt} {U_i}^T Y_j \le \sum\limits_{i,j} (|x^*_{ijt}|+|x_{ijt}|) |{U_i}^T Y_j| \le \sum\limits_{i,j} (x^*_{ijt}+x_{ijt}) \le 2\sqrt{\beta_t}$.
	By the inequality proved in the previous paragraph, the incremental regret $g_t$ in each period $t$ is bounded by
	\begin{align*}
		g_t \le  2 \sqrt{\beta_t} \sum\limits_{i,j} x_{ijt} \min(\sqrt{Y_{j}^T M_{it}^{-1} Y_{j}},1).
	\end{align*} 
\end{proof}
The updating rules \eqref{equ:update_M} and \eqref{equ:update_XA} reveal how $M_{it}$ evolves over time, and we have the following two lemmas which give bounds on $\det(M_{it})$ and $\sum g_t^2$.

\begin{lemma}
	\label{lemma:detwhole}
	For all $i=1,\ldots,I$ and $t=1,\ldots,T$, $\det(M_{it}) \le t^J$.
\end{lemma}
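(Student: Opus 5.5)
The plan is to bound $\det(M_{it})$ through its trace, using the AM--GM inequality on the eigenvalues. Since $M_{it}$ is symmetric positive definite, its eigenvalues $\nu_1,\ldots,\nu_J$ are all positive. AM--GM then gives
\begin{align*}
\det(M_{it})=\prod_{l=1}^J \nu_l \le \Big(\frac{1}{J}\sum_{l=1}^J \nu_l\Big)^J=\Big(\frac{\mathrm{tr}(M_{it})}{J}\Big)^J .
\end{align*}
Positive definiteness holds because the initialization sets $M_{i1}=I_{J\times J}$, and by the update rule \eqref{equ:update_M} every later step adds a positive semidefinite rank-one term $Y_{j(t)}Y_{j(t)}^T$.

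The next step is to bound the trace. Unrolling \eqref{equ:update_M} gives
\begin{align*}
M_{it}=I_{J\times J}+\sum_{s=1}^{t-1}\id{i(s)=i}Y_{j(s)}Y_{j(s)}^T .
\end{align*}
Each rank-one term satisfies $\mathrm{tr}(Y_jY_j^T)=\|Y_j\|^2\le 1$ by the standing assumption $\|Y_j\|\le 1$, and the null product has feature vector $0$, so it contributes nothing. Hence $\mathrm{tr}(M_{it})\le J+(t-1)$.

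The remaining step is the one I expect to be the main obstacle, namely turning $\big(\frac{J+t-1}{J}\big)^J=\big(1+\frac{t-1}{J}\big)^J$ into $t^J$. This holds because $J\ge 1$ gives $1+\frac{t-1}{J}\le 1+(t-1)=t$, and the map $z\mapsto z^J$ is monotone on $z\ge 0$. The bound is therefore a simple but lossy consequence of AM--GM. Note that $t\ge1$ throughout, so $t^J\ge 1=\det(M_{i1})$ and the case $t=1$ is consistent.

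If the indexing convention instead makes $M_{it}$ include the period-$t$ term, the trace becomes at most $J+t$. I would then fall back on a direct argument: the rank-one update multiplies the determinant by $1+Y^TM^{-1}Y\le 2$, while from $J+t$ the AM--GM bound gives $(1+t/J)^J$. In that case I would handle the boundary by checking whether the count of updates up to period $t$ is at most $t-1$, which is the case under the paper's update rule \eqref{equ:update_XA}, where $\hat U_{it}$ uses data through $t-1$.
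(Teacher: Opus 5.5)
Your argument is correct and is essentially the paper's proof: bound $\mathrm{tr}(M_{it})$ using $M_{i1}=I_{J\times J}$, the rank-one updates over $s<t$, and $\|Y_j\|\le 1$, then apply AM--GM. The paper phrases the AM--GM step as ``the determinant of a positive definite matrix with fixed trace is maximized when all eigenvalues are equal.'' The only difference is cosmetic: the paper loosens the trace immediately to $J+(t-1)\le Jt$, while you keep $J+t-1$ and use $1+\frac{t-1}{J}\le t$ afterwards, so your fallback paragraph about the other indexing convention is not needed.
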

	\begin{lemma}
		\label{lemma:g-to-M} 
		If $\forall s\in [1,t], U \in \textrm{CB}_{s}$, then $\sum\limits_{s=1}^{t} g_s^2 \le \frac{8}{3}\beta_t IJ\ln(t)$.
	\end{lemma}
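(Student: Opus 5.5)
We will prove Lemma~\ref{lemma:g-to-M} by combining Proposition~\ref{prop:regret-g} with the standard elliptical-potential argument, using Lemma~\ref{lemma:detwhole} to control the final determinant.

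\textbf{Step 1: bound each $g_s^2$.} Since $U\in\textrm{CB}_s$ for every $s\le t$, Proposition~\ref{prop:regret-g} gives
\[
g_s\le 2\sqrt{\beta_s}\sum_{i,j}x_{ijs}\min\bigl(\sqrt{Y_j^TM_{is}^{-1}Y_j},1\bigr).
\]
We have $\sum_{i,j}x_{ijs}\le\sum_i\lambda_i\le 1$, and the weights $x_{ijs}$ are exactly the probabilities of assigning $(i,j)$ in period $s$. Squaring and applying Jensen (Cauchy--Schwarz with these weights) yields
\[
g_s^2\le 4\beta_s\sum_{i,j}x_{ijs}\min\bigl(Y_j^TM_{is}^{-1}Y_j,1\bigr)=4\beta_s\,\E_s\bigl[\min\bigl(w_s,1\bigr)\bigr],
\]
where $w_s=Y_{j(s)}^TM_{i(s)s}^{-1}Y_{j(s)}$ is evaluated at the realized pair (zero if no arrival), and $\E_s$ denotes expectation over the randomization in period $s$. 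Since $\beta_s$ is nondecreasing, we may replace $\beta_s$ by $\beta_t$. The lemma is presumably read in expectation, or pathwise with $x_{ijs}$ replaced by the realized indicator. I will carry the realized-path version and take expectations at the end.

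\textbf{Step 2: elliptical potential per customer type.} Fix $i$. For the periods $s$ with $i(s)=i$, the update gives $M_{i,s+1}=M_{is}+Y_{j(s)}Y_{j(s)}^T$, so by the matrix determinant lemma
\[
\det M_{i,s+1}=\det M_{is}\,(1+w_s).
\]
Using $\min(w,1)\le \frac{1}{\ln 2}\ln(1+w)$ on $[0,1]$, and telescoping, we get
\[
\sum_{s\le t,\,i(s)=i}\min(w_s,1)\le\frac{1}{\ln 2}\ln\frac{\det M_{i,t+1}}{\det M_{i1}}\le \frac{J\ln(t+1)}{\ln 2},
\]
because $M_{i1}=I_{J\times J}$ and Lemma~\ref{lemma:detwhole} gives the final determinant bound. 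Summing over $i$ gives at most $IJ\ln(t+1)/\ln 2$.

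\textbf{Step 3: combine and adjust constants.} Steps 1 and 2 give $\sum_s g_s^2\le 4\beta_t IJ\ln(t+1)/\ln 2$. This is the right order, but the constant exceeds the stated $\frac{8}{3}$. To reach $\frac{8}{3}$ I would sharpen the elementary inequality. Here $\|Y_j\|\le1$ and $M_{is}\succeq I$, so $w_s\le 1$ and the $\min$ is inactive. One can then use $w\le c\ln(1+w)$ with the best constant on the relevant range, or bound $\min(\sqrt{w},1)^2$ more carefully, and absorb $\ln(t+1)$ versus $\ln t$ for $t\ge2$.

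I expect the main obstacle to be matching the exact constant $\frac{8}{3}$ while handling $\ln(t+1)$ versus $\ln t$ and the conversion from randomized weights $x_{ijs}$ to realized indicators. I would first try a martingale/expectation argument, since $\E[\id{i(s)=i,j(s)=j}\mid\mathcal F_{s-1}]=x_{ijs}$ exactly. This makes the weighted sum equal in expectation to the realized sum, so the determinant telescoping applies on every path.
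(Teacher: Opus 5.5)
Your Steps 1 and 2 are the paper's own argument. You square Proposition~\ref{prop:regret-g} by Cauchy--Schwarz with weights $x_{ijs}$ of total mass at most $1$, apply an inequality $y\le c\ln(1+y)$, telescope $\det M_{is}$ through the rank-one update, and finish with Lemma~\ref{lemma:detwhole}. Carried out carefully, this gives $\E\sum_{s\le t}g_s^2\le\frac{4}{\ln 2}\beta_t IJ\ln(t+1)$, which is a correct bound.

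\textbf{The gap is Step 3, and it cannot be closed on this route.} Since $\ln(1+w)\le w$, any inequality $w\le c\ln(1+w)$ requires $c\ge 1$. On $[0,1]$ the best choice is $c=1/\ln 2$, which you already use. More fundamentally, the intermediate quantity $\sum_s\sum_{i,j}x_{ijs}\min(Y_j^TM_{is}^{-1}Y_j,1)$ has expectation at least $\E\sum_i\ln\det M_{i,t+1}$. That term is of order $IJ\ln t$ with leading coefficient $1$ whenever each type's assignments span $\mathbb{R}^J$ with positive frequencies. So after the factor $4$ from squaring Proposition~\ref{prop:regret-g}, no bookkeeping brings the constant below $4$, let alone to $\frac{8}{3}$.

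The $\ln t$ is also off by one. At $t=1$ the stated right-hand side is $0$. Yet $g_1$ is generally positive, because $x_1$ solves a $U$-independent problem ($\hat U_{i1}=0$, $M_{i1}=I$). So your $\ln(t+1)$, i.e.\ $\det M_{i,t+1}$ rather than $\det M_{it}$, is the right form.

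You are not missing a trick. The paper's proof is the same chain with $c=2$. It derives $\sum_s\sum_{i,j}x_{ijs}\min(w_{ijs}^2,1)\le 2IJ\ln t$, then writes $4\beta_t IJ\ln t$ where $8\beta_t IJ\ln t$ actually follows. The subsequent $12=3\cdot 4$ leading to Proposition~\ref{prop:regret-u} shows that $\frac{8}{3}$ is never used either. So state the lemma with your constant (in expectation) and propagate it; only numerical constants in Theorem~\ref{thm:main} change.

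One step of your expectation argument still needs repair. The claim $\Pr(i(s)=i,j(s)=j\mid\mathcal F_{s-1})=x_{ijs}$ fails when the sampled product is infeasible: the rejection rule makes this probability at most $x_{ijs}$. That is the wrong direction for your identity $\sum_{i,j}x_{ijs}\min(\cdot)=\E_s[\min(w_s,1)]$. Either restrict to periods in which every resource still has at least $C$ units left, or bound the rejected $x$-mass separately. The paper's step ``$=2\E\sum_i\ln\det(M_{it})$'' glosses over this point and over the pathwise-versus-expectation issue.
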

	
	By Proposition~\ref{prop:extrmempoints}, at each period $s$, either $g_s=0$ or $g_s \ge \frac{\Delta}{3}$, assuming that all regions $D_s$ are well-approximated.
	Thus, $\sum\limits_{s=1}^t \Delta g_s \le 3\sum\limits_{s=1}^t g_s^2 \le 12\beta_t IJ \ln(t)$. 
	Thus, we have the following proposition which bounds the first term of the regret in Proposition~\ref{prop:total-loss}.
	\begin{proposition}
		\label{prop:regret-u}
		If each $D_s$ is well-approximated and $U \in \textrm{CB}_{s}, \forall 1 \le s \le t$, then $\sum\limits_{s=1}^t g_s \le \frac{12 IJ \beta_t \ln(t)}{\Delta}$. 
	\end{proposition}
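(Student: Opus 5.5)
The bound follows in two steps: a dichotomy on each per-period pseudo regret $g_s$, and then Lemma~\ref{lemma:g-to-M}. The dichotomy says each $g_s$ is either $0$ or at least $\Delta/3$, so that $g_s \le \frac{3}{\Delta} g_s^2$ for every $s \le t$. Summing this and inserting the quadratic bound gives
\begin{align*}
\sum_{s=1}^t g_s \le \frac{3}{\Delta}\sum_{s=1}^t g_s^2 \le \frac{3}{\Delta}\cdot\frac{8}{3}\beta_t IJ\ln(t) = \frac{8 IJ\beta_t\ln(t)}{\Delta},
\end{align*}
which is at most the stated $\frac{12 IJ\beta_t\ln(t)}{\Delta}$. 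The constant $12$ in the paper leaves room for a cruder $4\beta_t$ form of Lemma~\ref{lemma:g-to-M}.

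The dichotomy comes from Proposition~\ref{prop:extrmempoints}. Since $D_s$ is well-approximated, $\|b_s - B/T\|_\infty \le \epsilon$ with $\epsilon = \frac{\Delta}{KC_\mu}$ from Lemma~\ref{lemma:mu}. Parts (ii) and (iii) then say that problem \eqref{Key-optimization-Dt} has a unique optimum $x^*_s$, and that every other extreme point of $D_s$ has true objective at least $\Delta/3$ below $R(b_s)$.

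The decision $x_s$ solves the bilinear program \eqref{equ:max-potential}. For the inner maximizer $\hat V$ fixed, the outer problem is a linear program over the polytope $D_s$, so we may take $x_s$ to be an extreme point of $D_s$; we take solvers to return vertex solutions. Then either $x_s = x^*_s$, giving $g_s = 0$, or $x_s$ is a suboptimal extreme point, giving $g_s \ge \Delta/3$. Either way, $g_s^2 \ge \frac{\Delta}{3} g_s$.

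The hypothesis $U \in \textrm{CB}_s$ for all $s \le t$ is exactly what Lemma~\ref{lemma:g-to-M} needs. That lemma in turn rests on Proposition~\ref{prop:regret-g} and Lemma~\ref{lemma:detwhole} via the usual elliptical-potential argument, so no further probabilistic work is required here.

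The main obstacle is the extreme-point claim. A bilinear maximizer need not be a vertex in general, and ties among optimal $x$ could produce a non-vertex solution. I would handle this by noting that for the optimal $\hat V$ the set of maximizing $x$ is a face of $D_s$, which contains a vertex, and by stipulating that the algorithm selects such a vertex; that vertex is still a joint maximizer, so the inequalities in Proposition~\ref{prop:regret-g} remain valid. With this convention, the rest of the proof is just the summation above.
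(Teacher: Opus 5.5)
Your proof is correct and follows the paper's argument. The paper likewise uses Proposition~\ref{prop:extrmempoints} to get the dichotomy $g_s\in\{0\}\cup[\Delta/3,\infty)$, hence $\Delta\sum_s g_s\le 3\sum_s g_s^2$, and then applies Lemma~\ref{lemma:g-to-M} in the form $\sum_s g_s^2\le 4\beta_t IJ\ln(t)$ (the constant from that lemma's appendix proof, which is where the $12$ comes from); your explicit rule of choosing $x_s$ as a vertex of the optimal face for the inner maximizer $\hat V$ fills a step the paper leaves implicit, and that step is genuinely needed, since a non-vertex joint maximizer could give $0<g_s<\Delta/3$.
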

	
	\subsection{Regret in resolving}
	\label{subsec:resolve}
	Recall $\{\eta_{kt}\}$ is a martingale for each $k=1,\ldots,K$. 
	Therefore, by Kolmogorov's inequality, we have 
	\begin{align}
		\label{equ:Kolmogorov}
		\Pr(\max\limits_{1 \le s \le t} |\eta_{ks}| \ge \epsilon) \le \frac{1}{\epsilon^2} \mathrm{Var}(\eta_{kt}) \le \frac{C^2}{\epsilon^2} \sum\limits_{s=1}^{t} \frac{1}{(T-s)^2},
	\end{align}
	where $C:=\max\limits_{1\le j \le J,\ k=1,\ldots,K}{a_{jk}}$ is a upper bound of the variance of $d_{kt}-b_{kt}$ for any $t$ and $k$.  
	Then 
	\begin{align}
		\label{equ:Kolmogorov-n}
		\Pr(\tau \le t) \le \sum\limits_{k=1}^{K} \Pr(\max\limits_{1 \le s \le t} |\eta_{ks}| \ge \epsilon) \le \frac{K C^2}{\epsilon^2} \sum\limits_{s=1}^{t} \frac{1}{(T-s)^2}.
	\end{align}
	Hence, the expected number of periods remaining after the stopping period $\tau$ can be bounded in the following proposition.
	\begin{proposition}
		\label{prop:resolve-tau}
		$\E[T-\tau] \le 1+\frac{K C^2}{\epsilon^2}+\frac{KC^2}{\epsilon^2} \ln(T)$.
	\end{proposition}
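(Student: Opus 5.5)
We use the tail-sum formula for the nonnegative integer random variable $T-\tau$ and then insert the Kolmogorov bound \eqref{equ:Kolmogorov-n}. Since $\tau\le T$ (set $\tau=T$ if $\eta$ never exceeds $\epsilon$), we have
\begin{align*}
\E[T-\tau]=\sum_{m=1}^{T}\Pr(T-\tau\ge m)=\sum_{m=1}^{T}\Pr(\tau\le T-m)=\sum_{t=0}^{T-1}\Pr(\tau\le t).
\end{align*}
The $t=0$ term is zero. For $t=T-1$ we use the trivial bound $\Pr(\tau\le T-1)\le 1$, which produces the additive constant $1$ in the statement. This is needed because the martingale increments $(d_{ks}-b_{ks})/(T-s)$ are undefined at $s=T$, and the Kolmogorov estimate degenerates for indices near the end of the horizon.

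For $1\le t\le T-2$ we apply \eqref{equ:Kolmogorov-n}:
\begin{align*}
\Pr(\tau\le t)\le \frac{KC^2}{\epsilon^2}\sum_{s=1}^{t}\frac{1}{(T-s)^2}\le \frac{KC^2}{\epsilon^2}\int_{T-t-1}^{T-1}\frac{dx}{x^2}\le \frac{KC^2}{\epsilon^2}\cdot\frac{1}{T-t-1}.
\end{align*}
Here the sum was bounded by the integral because $1/x^2$ is decreasing, and the integral was bounded by dropping $-1/(T-1)$. Summing over $t$ and substituting $m=T-t-1$, which runs from $1$ to $T-2$, gives
\begin{align*}
\sum_{t=1}^{T-2}\Pr(\tau\le t)\le \frac{KC^2}{\epsilon^2}\sum_{m=1}^{T-2}\frac1m\le \frac{KC^2}{\epsilon^2}\bigl(1+\ln T\bigr).
\end{align*}
Adding the $1$ from the last term yields $1+\frac{KC^2}{\epsilon^2}+\frac{KC^2}{\epsilon^2}\ln T$, which is the claim.

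The main obstacle is the careful indexing near the end of the horizon. The increments blow up there and the naive bound exceeds $1$, so the last term must be handled by the trivial bound rather than by Kolmogorov. The constants must also be tracked so that exactly one spare unit and one harmonic sum appear. If the indexing gives a slightly different offset, say $1/(T-t)$ instead of $1/(T-t-1)$, the same harmonic-sum argument still applies and gives at most the same bound.

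Finally, \eqref{equ:Kolmogorov-n} bounds two-sided deviations $|\eta_{ks}|\ge\epsilon$, while $\tau$ is defined by the one-sided event $\eta_{kt}>\epsilon$. The one-sided event is contained in the two-sided one, so the bound applies directly.
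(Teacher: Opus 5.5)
Your proof is correct and follows essentially the same route as the paper: the tail-sum formula for $\E[T-\tau]$, the trivial bound $1$ on the last term $\Pr(\tau\le T-1)$ (which is where the additive $1$ comes from), and the Kolmogorov bound \eqref{equ:Kolmogorov-n} on the remaining terms, ending in a harmonic sum. The only difference is how the double sum is handled: the paper interchanges the order of $\sum_{t}\sum_{s\le t}(T-s)^{-2}$ to get exactly $\sum_{m=1}^{T-1}1/m$, whereas you bound each inner sum by the integral estimate $1/(T-t-1)$; your indexing is also cleaner than the paper's, which writes an equality where only $\E[\tau]\ge T-1-\sum_{t=1}^{T-1}\Pr(\tau<t)$ holds.
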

	\begin{proof}[Proof of Proposition \ref{prop:resolve-tau}:]
		We can rewrite the expected $\tau$ as
		\begin{eqnarray}
			\E[\tau] \nonumber
			&=&\sum\limits_{t=1}^{T} \Pr(\tau\ge t)\\\nonumber
			&=&T-\sum\limits_{t=1}^{T} \Pr(\tau<t) \\\nonumber
			&=&T-1-\sum\limits_{t=1}^{T-1} \Pr(\tau<t).\nonumber
		\end{eqnarray}
		In other words, 
		\begin{eqnarray}
			\E[T-\tau] \nonumber
			&\le&1+ \sum\limits_{t=1}^{T-1} \Pr(\tau<t)\\\nonumber
			&\le&1+ \sum\limits_{t=1}^{T-1} \frac{K C^2}{\epsilon^2} \sum\limits_{s=1}^{t} \frac{1}{(T-s)^2} \\\nonumber
			&=&1 + \frac{KC^2}{\epsilon^2} \sum\limits_{t=1}^{T-1} \frac{T-t}{(T-t)^2} \\\nonumber
			&=&1 + \frac{KC^2}{\epsilon^2} \sum\limits_{t=1}^{T-1} \frac{1}{t} \\\nonumber
			&\le&1+\frac{KC^2}{\epsilon^2}+\frac{KC^2}{\epsilon^2} \ln(T), \nonumber
		\end{eqnarray}
		where the first inequality comes from \eqref{equ:Kolmogorov-n}. 
	\end{proof}
	
	\subsection{Concentration of $U$}
	\label{sec:U}
	In this subsection, we show that with high probability, the true parameter $U$ is in $\textrm{CB}_{t}$ for all $t=1,\ldots,T$, which is the following proposition.
	\begin{proposition}	
		\label{prop:true-parameter}
		With the choice of $\beta_t$, we have that $\Pr(U \in \textrm{CB}_{t},\ \forall t=1,\ldots,T) \ge 1-\delta$. 
	\end{proposition}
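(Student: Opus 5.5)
The plan is to follow the self-normalized martingale approach of \citet{YDC2011} and \citet{Dani2008}-type confidence ellipsoids for linear bandits, applied separately to each customer type $i$, and then take a union bound over $i=1,\ldots,I$ and $t=1,\ldots,T$. The factor $It^2$ inside the logarithm of $\beta_t$ in \eqref{equ:beta} is chosen for exactly this union bound: we show that for each fixed $i$ and $t$, $\Pr(U_i\notin \textrm{CB}_{it})\le \delta/(It^2)$. Summing over $i$ and $t$ then gives a total failure probability of at most $\delta\sum_t t^{-2}$, which up to a constant (absorbed by the $\ln t$ factor, or by replacing $t^2$ with $2t^2$) is at most $\delta$.

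For fixed $i$, write $S_{it}=\sum_{s<t}\xi_s Y_{j(s)}\id{i(s)=i}$. From \eqref{equ:update_XA}, together with $M_{i1}=I_{J\times J}$, we get
\[
\hat U_{it}-U_i = M_{it}^{-1}S_{it}-M_{it}^{-1}U_i .
\]
Hence
\[
\|\hat U_{it}-U_i\|_{2,M_{it}}\le \|S_{it}\|_{M_{it}^{-1}}+\|U_i\|_{M_{it}^{-1}},
\]
and the second term is at most $\|U_i\|\le 1$ because $M_{it}\succeq I$. The real work is bounding $\|S_{it}\|_{M_{it}^{-1}}$.

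Here $\xi_s$ is a bounded, zero-mean martingale difference with respect to the history, and $Y_{j(s)}\id{i(s)=i}$ is determined before $\xi_s$ is revealed. There are two routes.
\begin{itemize}
\item Apply the self-normalized bound directly. This gives $\|S_{it}\|^2_{M_{it}^{-1}}\le 2\sigma^2\ln(\det(M_{it})^{1/2}/\delta')$, and Lemma~\ref{lemma:detwhole} bounds $\det M_{it}\le t^J$, which yields a term of order $J\ln t+\ln(It^2/\delta)$.
\item Match the specific constants $21J\ln t\ln(It^2/\delta)$ and $(1.5\ln(It^2/\delta))^2$. These are the Dani--Hayes--Kakade form, which is proved by a Freedman/Bernstein inequality on the martingale $\sum_s \xi_s\,\mathrm{(projected\ direction)}$ combined with a peeling argument over dyadic scales of $\sum_s \min(Y^TM^{-1}Y,1)$. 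That argument uses the elliptical-potential bound $\sum_s \min(\|Y_{j(s)}\|^2_{M_{is}^{-1}},1)\le 2J\ln t$, the same ingredient as in Lemma~\ref{lemma:g-to-M}. I would follow the second route, since $\beta_t$ in the paper appears to be copied from it, with the first term arising from the variance part and the squared $1.5\ln$ term from the range part.
\end{itemize}

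The main obstacle is the dependence between $\hat U_{it}$, $M_{it}$ and the noise. Because $j(s)$ is chosen adaptively using past estimates, a plain Hoeffding bound for fixed design does not apply. The argument must be genuinely martingale-based, through Freedman's inequality with a peeling or stopping-time argument, and it must hold uniformly over $t$, which is where the per-$t$ union bound enters. A secondary check is that $\beta_t$ as defined bounds the squared norm. The confidence ball uses radius $\beta_t$ on $\|\cdot\|_{2,M_{it}}$, while Proposition~\ref{prop:error_u} uses $\sqrt{\beta_t}$, so I would prove $\|\hat U_{it}-U_i\|_{2,M_{it}}^2\le\beta_t$. This implies the stated radius whenever $\beta_t\ge1$, which holds for $t\ge2$ and nontrivial $\delta$.
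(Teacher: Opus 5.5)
Your route 1 is the paper's proof. With $z_{it}=M_{it}(\hat U_{it}-U_i)=S_{it}-U_i$, the paper applies Theorem~1 of \citet{YDC2011} with $R=1$ and $V=I$, bounds $\det(M_{it})\le t^J$ by Lemma~\ref{lemma:detwhole}, and uses $\|U_i\|_{2,M_{it}^{-1}}\le\|U_i\|\le 1$. This gives $\|\hat U_{it}-U_i\|_{2,M_{it}}\le\sqrt{\ln(t^J/\delta^2)}+1\le\beta_t$.

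The route you say you would actually follow is genuinely different: Freedman's inequality, dyadic peeling, and a union bound over $t$, in the style of Dani, Hayes and Kakade. It is heavier, and the comparison favours the paper's route for three reasons.
\begin{itemize}
\item The self-normalized bound is already uniform in $t$, since it is proved by a supermartingale and stopping-time argument. So there is no per-$t$ union bound and no $\sum_t t^{-2}=\pi^2/6$ overshoot to patch. Only a union over the $I$ customer types is needed; the paper leaves this step implicit, and the $I$ inside $\ln(It^2/\delta)$ pays for it.
\item The specific form of $\beta_t$ is then immaterial. All that is used is $\beta_t\ge\sqrt{J\ln t+2\ln(I/\delta)}+1$, which holds comfortably for nontrivial $\delta$ (for instance whenever $\ln(I/\delta)\ge 1.3$).
\item Your claim that the Freedman route reproduces the constants $21$ and $1.5$ for the \emph{squared} norm is unsupported. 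Dani, Hayes and Kakade get $128$ and $8/3$ from exactly this argument, and nothing in your sketch explains how to reach the smaller constants. Targeting $\|\hat U_{it}-U_i\|^2_{2,M_{it}}\le\beta_t$ with the paper's $\beta_t$ may well not close.
\end{itemize}

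Your route still proves the proposition as stated, but only because the ball has radius $\beta_t$ rather than $\sqrt{\beta_t}$. That leaves enough slack to absorb both the larger constants and the $\pi^2/6$. The only thing it could buy in exchange is a variance-sensitive radius, which this proposition does not need.

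One correction to your route 1: the constant in the self-normalized bound is the sub-Gaussian parameter $R=1$, which follows from $|\xi|\le 1$ by Hoeffding's lemma. It is not the variance bound $\sigma$; a variance bound alone does not yield that inequality with $\sigma^2$.
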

	In order to prove Proposition~\ref{prop:true-parameter}, we define $z_{it}=M_{it}(\hat{U}_{it}-U_i)$.
	Note that $\|\hat{U}_{it}-U_i\|_{2,M_{it}}=\|z_{it}\|_{2,M_{it}^{-1}}$, hence $U \in \textrm{CB}_{t}$ is equivalent to $\|z_{it}\|_{2,M_{it}^{-1}} \le \beta_t$.  
	
	\begin{lemma}
		\label{lemma:z}
		\begin{align*}
			z_{it}= \sum\limits_{s=1}^{t} \xi_s Y_{j(t)} \id{i(s)=i} - U_i.
		\end{align*}
	\end{lemma}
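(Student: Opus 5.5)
The plan is a direct algebraic computation: write out $M_{it}$ and $\hat U_{it}$ explicitly from the update rules \eqref{equ:update_M}–\eqref{equ:update_XA}, substitute the reward model $r_s=\xi_s+U_{i(s)}^TY_{j(s)}$, and let the terms coming from the mean reward cancel against $M_{it}U_i$. Only the noise terms and the contribution of the identity regularizer will survive. No probabilistic argument is needed.

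\emph{Step 1: closed forms for $M_{it}$ and $M_{it}\hat U_{it}$.} From the initialization $M_{i1}=I_{J\times J}$ and the update \eqref{equ:update_M}, induction on $t$ gives
\begin{align*}
M_{it}=I_{J\times J}+\sum_{s=1}^{t-1}\id{i(s)=i}\,Y_{j(s)}Y_{j(s)}^T .
\end{align*}
From \eqref{equ:update_XA}, as recorded just after the algorithm, we have $\hat U_{it}=M_{it}^{-1}\sum_{s=1}^{t-1} r_sY_{j(s)}\id{i(s)=i}$. Hence
\begin{align*}
M_{it}\hat U_{it}=\sum_{s=1}^{t-1}r_sY_{j(s)}\id{i(s)=i}.
\end{align*}
I will check that this formula also holds when $i\neq i(t-1)$, where $\hat U_{it}$ is unchanged. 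In that case neither side changes from $t-1$ to $t$, so the identity persists. It also holds at $t=1$, where both sides are $0$.

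\emph{Step 2: substitute the reward model and cancel.} On the event $i(s)=i$ we have $r_s=\xi_s+U_i^TY_{j(s)}$. Since $U_i^TY_{j(s)}$ is a scalar, $Y_{j(s)}\,U_i^TY_{j(s)}=Y_{j(s)}Y_{j(s)}^TU_i$. Therefore
\begin{align*}
z_{it}=M_{it}\hat U_{it}-M_{it}U_i
=\sum_{s=1}^{t-1}\xi_sY_{j(s)}\id{i(s)=i}+\sum_{s=1}^{t-1}\id{i(s)=i}Y_{j(s)}Y_{j(s)}^TU_i-U_i-\sum_{s=1}^{t-1}\id{i(s)=i}Y_{j(s)}Y_{j(s)}^TU_i .
\end{align*}
The two Gram-matrix sums cancel, leaving $z_{it}=\sum_{s}\xi_sY_{j(s)}\id{i(s)=i}-U_i$. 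This is the claimed identity.

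\emph{Main obstacle: the indexing conventions.} The only delicate point is bookkeeping rather than mathematics. The estimator at time $t$ uses observations $s\le t-1$, and the summand should carry $Y_{j(s)}$ rather than $Y_{j(t)}$. I will therefore state the identity with the sum over $s$ of $\xi_sY_{j(s)}\id{i(s)=i}$ over all observations incorporated into $\hat U_{it}$. This is what the statement intends, with its upper limit read as the number of observations used. The $-U_i$ term is exactly the bias from the ridge initialization $M_{i1}=I$.
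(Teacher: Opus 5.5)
Your proposal is correct and is essentially the paper's own argument: the paper likewise writes $z_{it}=\sum_{s} r_sY_{j(s)}\id{i(s)=i}-(I+\sum_{s}Y_{j(s)}Y_{j(s)}^T\id{i(s)=i})U_i$, uses $r_s-U_i^TY_{j(s)}=\xi_s$, and reads off $\sum_s\xi_sY_{j(s)}\id{i(s)=i}-U_i$. The only difference is bookkeeping. The paper runs its sums up to $t$, whereas your upper limit $t-1$ and your check of the case $i\neq i(t-1)$ match the algorithm's initialization $M_{i1}=I_{J\times J}$, and you rightly read the typo $Y_{j(t)}$ in the statement as $Y_{j(s)}$.
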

	Let $S_{it}=\sum\limits_{s=1}^{t} \xi_s Y_{j(s)} \id{i(s)=i}$, then $z_{it}=S_{it}-U_i$.
	Recall that $\xi_t$ is a zero-mean random variable bounded in $[-1,1]$, hence $S_{it}$ is a martingale.
	In our analysis, we use the following lemma to show that with high probability the martingale $S_{it}$ stays close to zero, due to \cite {YDC2011}.
	
	\begin{lemma}
		\label{lemma:U-con}
		For any $\delta >0$, with probability at least $1-\delta$, for all $t \ge 1$.
		\begin{align*}
			\|S_{it}\|_{2,M_{it}^{-1}} \le \sqrt{2 \log(\frac{\det(M_{it})^{1/2}}{\delta})}
		\end{align*}
	\end{lemma}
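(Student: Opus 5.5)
This lemma is the self-normalized martingale bound of Abbasi-Yadkori, Pál and Szepesvári \citep{YDC2011}, specialized to the subsequence of periods in which a type-$i$ customer arrives. I will adapt their proof, which is built around a nonnegative supermartingale and the method of mixtures.

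First, the filtration. Let $\mathcal{F}_{s}$ be generated by the history through period $s$ together with $i(s+1)$ and $j(s+1)$. These are decided before $\xi_{s+1}$ is revealed, because $x_{s+1}$ depends only on past data and the randomization is independent of the noise. The vectors $w_s:=Y_{j(s)}\id{i(s)=i}$ are then predictable. The noise $\xi_s$ is zero-mean, bounded in $[-1,1]$, and conditionally independent of $\mathcal{F}_{s-1}$. By Hoeffding's lemma it is conditionally $1$-sub-Gaussian, so $\E[\exp(\gamma\xi_s)\mid\mathcal{F}_{s-1}]\le \exp(\gamma^2/2)$. Write $S_{it}=\sum_{s\le t}\xi_s w_s$ and $V_t=\sum_{s\le t} w_sw_s^T$, so that $M_{it}=I_{J\times J}+V_t$ up to the one-step index shift in the paper's convention.

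Second, the supermartingale. For each fixed $u\in\mathbb{R}^J$, the process $D_t^u=\exp\bigl(u^TS_{it}-\tfrac12 u^TV_tu\bigr)$ is a nonnegative supermartingale with $D_0^u=1$; this follows directly from the sub-Gaussian bound applied with $\gamma=u^Tw_s$. I then mix over $u\sim N(0,I_{J\times J})$ and set $\bar D_t=\E_u[D_t^u]$. By Fubini this is still a nonnegative supermartingale with mean at most $1$. A Gaussian integral computes it in closed form:
\[
\bar D_t=\det(I+V_t)^{-1/2}\exp\bigl(\tfrac12\|S_{it}\|^2_{2,(I+V_t)^{-1}}\bigr).
\]

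Third, the uniform-in-time statement. I apply Ville's maximal inequality, $\Pr(\exists t:\bar D_t\ge 1/\delta)\le\delta$. Alternatively, I can use optional stopping at the stopping time $\tau=\min\{t:\bar D_t\ge 1/\delta\}$ together with Fatou's lemma. On the complement event, taking logarithms gives
\[
\|S_{it}\|^2_{2,M_{it}^{-1}}\le 2\log\bigl(\det(M_{it})^{1/2}/\delta\bigr)
\]
for all $t$, which is the claim.

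I expect the main obstacle to be the first step. The adaptive design, in which $j(s)$ is chosen using past rewards and the indicator $\id{i(s)=i}$ is random, must make $w_s$ genuinely predictable with respect to the filtration in which $\xi_s$ is a martingale difference. I also need to reconcile the $M_{it}$ indexing, which includes the identity regularizer and sums to $t-1$ in the algorithm, with the sum to $t$ in $S_{it}$. If the indices are off by one, I will define both objects on a common index so that the determinant matches. The Gaussian mixture computation and Ville's inequality are routine.
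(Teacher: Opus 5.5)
Your proposal is correct and follows the same route as the paper, which simply invokes Theorem~1 of \cite{YDC2011} with $R=1$ and regularizer $V=I_{J\times J}$. Where the paper cites, you reproduce that theorem's method-of-mixtures proof (Hoeffding's lemma for conditional $1$-sub-Gaussianity, the Gaussian mixture $\det(I+V_t)^{-1/2}\exp(\tfrac12\|S_{it}\|^2_{2,(I+V_t)^{-1}})$, and Ville's inequality), and your attention to the predictable filtration and to aligning the index of $M_{it}$ (which in the algorithm sums only to $t-1$) with the sum in $S_{it}$ addresses points the paper's citation glosses over.
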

	\begin{proof}[Proof of Lemma \ref{lemma:U-con}:]
		Note that $M_{i1}=I_{J \times J}$ and $\xi_t$ is bounded in $[-1,1]$, by applying and setting $R=1$ in Theorem~1 of \cite{YDC2011}, we get the desired result. 
	\end{proof}
	By Lemma~\ref{lemma:detwhole}, we have that $\det(M_{it}) \le t^J$, and hence $\|S_{it}\|_{2,M_{it}^{-1}} \le \sqrt{\log(\frac{t^J}{\delta^2})}$.
	Noticing that $\|U_{i}\|_{2,M_{it}^{-1}} \le \|U_{i}\|_{2,M_{0}^{-1}}= \|U_{i}\| \le 1$ and using the triangle inequality, we can bound 
	$\|z_{it}\|_{2,M_{it}^{-1}}$ by $\|S_{it}\|_{2,M_{it}^{-1}}+\|U_{i}\|_{2,M_{it}^{-1}} \le \sqrt{\log(\frac{t^J}{\delta^2})}+1$.
	This leads to $\|z_{it}\|_{2,M_{it}^{-1}} \le \beta_t$ which completes the proof of Proposition~\ref{prop:true-parameter}.
	Now we are ready to prove Theorem~\ref{thm:main}.
	\begin{proof}[Proof of Theorem \ref{thm:main}:]
		Define event $S^*=\id{U \in \textrm{CB}_t,\ \forall t=1,\ldots,T}$ of having the true $U$ in each of our Confidence Balls $\textrm{CB}_t$, $t=1,\ldots,T$.  
		Conditioning on $S^*$ and applying Proposition~\ref{prop:regret-u}, we have
		\begin{eqnarray}
			\E \sum\limits_{s=1}^{\tau} g_s \nonumber
			&=& \Pr(S^*=1) \E\left[\sum\limits_{s=1}^{\tau} g_s | S^*=1 \right] 
			+ \Pr(S^*=0) \E\left[\sum\limits_{s=1}^{\tau} g_s | S^*=0 \right]\\\nonumber
			&\le& (1-\delta) \frac{12 IJ \beta_T \ln(T)}{\Delta} + \delta T \le \frac{252IJ^2}{\Delta} \ln^3(T)+I,\nonumber
		\end{eqnarray}
		where the last two inequalities are from taking $\delta=\frac{I}{T}$ in Proposition~\ref{prop:true-parameter} and \eqref{equ:beta}.
		Plugging the $\epsilon=\frac{\Delta}{K C_{\mu}}$ in Lemma~\ref{lemma:mu} into Proposition \ref{prop:resolve-tau}, then $\E[T-\tau]$ can be bounded by $1+\frac{K^3 C^2 C_{\mu}^2}{\Delta^2}+\frac{K^3 C^2 C_{\mu}^2}{\Delta^2} \ln(T)$.
		And by Proposition~\ref{prop:total-loss}, we can show that regret of the online algorithm $ON$ is at most $1+I+\frac{K^3 C^2 C_{\mu}^2}{\Delta^2}+\frac{K^3 C^2 C_{\mu}^2}{\Delta^2}  \ln(T)+\frac{252IJ^2}{\Delta} \ln^3(T)$. 
	\end{proof}

\section{Numerical Experiments}
\label{sec:Numerical}
we consider a network revenue management problem with multiple resources. 
There are $3$ types of customers and $5$ types of products, and $4$ types of resources ($I=3,\ J=5,\ K=4$). 
The customer features are of dimension $3$ and are independently drawn from the uniform distribution on interval $[0,1]$.
The product features are of dimension $5$ and are independently drawn from the uniform distribution on interval $[0,1]$.
Set the arrival rates $\mu_i=1$, $i=1,\ldots,I$, the arrivals of
different classes are independent and the capacity of the resources are $T*b$, where the vector of the average capacities per unit time is given by $b=(1,1,1,1)$.
The coefficients $a_{jk}$, $j=1,\ldots,5; k=1,\ldots,4$, form the bill-of-materials matrix as follows:
\[
\begin{bmatrix}
	1 & 0 & 1 & 0 & 0 \\
	0 & 1 & 0 & 1 & 1 \\
	1 & 1 & 0 & 0 & 0 \\
	0 & 0 & 0 & 0 & 1
\end{bmatrix}
\]
And for the payoff matrix $A$, we consider two cases: 
\[
A=\begin{bmatrix}
	10 & 3 & 6 & 1 & 2 \\
	4 & 5 & 7 & 1 & 8 \\
	3 & 1 & 0 & 1 & 5 
\end{bmatrix} 
\textbf{ and }
A=\begin{bmatrix}
	15 & 4 & 9 & 2 & 3 \\
	4 & 5 & 7 & 1 & 8 \\
	3 & 1 & 0 & 1 & 5 
\end{bmatrix}
\]

We compare three following algorithms based on different learning and optimization schemes. The horizon length is $T = 10000$ periods. 
Each problem instance is simulated $1000$ times.
\begin{enumerate}
	\item Algorithm Re-UCB: The proposed algorithm $ON$.
	\item Algorithm UCB: Online updating without re-solving (see Algorithm in Appendix \ref{sc:non-UCB}).
	\item Algorithm Re-SEP: Separated learning and exploitation with re-solving (see Algorithm in Appendix \ref{sc:Re-SEP}).
	\item Algorithm SEP: Separated learning and exploitation without re-solving (see Algorithm in Appendix \ref{sc:SEP}).
\end{enumerate}

The performance metric is expected regret compared to the optimal reward, and the results are showed in Figure \ref{fig:loss}.

\pgfplotstableread{sample3.txt}{\mydata} 
\begin{figure}[H] 
\centering 
\begin{subfigure}[b]{1.0\textwidth} 
		\begin{tikzpicture} 
			\begin{axis}[ 
				xlabel=$T$, ylabel=Total Revenue Loss,
				ymax= 280, legend pos=north west, title={$r_2=2, r_1=1$} 
				]    
				\addplot [  
				mark=*, mark size=1pt,  color= red
				] 
				table 
				[ 
				x expr=\thisrowno{0}, y expr=\thisrowno{1}/100 
				] {sample3.txt}; 
				\addplot [  
				mark=triangle*, mark size=1pt,  color= blue
				] 
				table 
				[ 
				x expr=\thisrowno{0}, y expr=\thisrowno{2}/100 
				] {sample3.txt};
				\addplot [  
				mark=square*, mark size=1pt,  color= yellow
				] 
				table 
				[ 
				x expr=\thisrowno{0}, y expr=\thisrowno{3}/100  
				] {sample3.txt}; 
				\addplot [  
				mark=diamond*, mark size=1pt,  color= black
				] 
				table 
				[ 
				x expr=\thisrowno{0}, y expr=\thisrowno{4}/100 
				] {sample3.txt};        
				\legend{Re-UCB, UCB, Re-SEP, SEP}
			\end{axis} 
		\end{tikzpicture}\hfill%
		\begin{tikzpicture} 
			\begin{axis}[ 
				xlabel=$T$, 
				ymax= 580, legend pos=north west, title={$r_2=5, r_1=1$} 
				]    
				\addplot [  
				mark=*, mark size=1pt,  color= red
				] 
				table 
				[ 
				x expr=\thisrowno{0}, y expr=\thisrowno{6}/100 
				] {sample3.txt}; 
				\addplot [  
				mark=triangle*, mark size=1pt,  color= blue
				] 
				table 
				[ 
				x expr=\thisrowno{0}, y expr=\thisrowno{7}/100 
				] {sample3.txt};
				\addplot [  
				mark=square*, mark size=1pt,  color= yellow
				] 
				table 
				[ 
				x expr=\thisrowno{0}, y expr=\thisrowno{8}/100  
				] {sample3.txt}; 
				\addplot [  
				mark=diamond*, mark size=1pt,  color= black
				] 
				table 
				[ 
				x expr=\thisrowno{0}, y expr=\thisrowno{9}/100 
				] {sample3.txt};        
			\end{axis} 
		\end{tikzpicture}
 \end{subfigure}\hfill%
	
 \begin{subfigure}[b]{1.0\textwidth} 
		\begin{tikzpicture} 
			\begin{axis}[ 
				xlabel=$T$, ymax= 7, ylabel=Average Revenue Loss,
				legend pos=north west, title={$r_2=2, r_1=1$}]           
				\addplot [  
				mark=*, mark size=1pt,  color= red
				] 
				table 
				[ 
				x expr=\thisrowno{0}, y expr=\thisrowno{1}/\thisrowno{0}
				] {sample3.txt}; 
				\addplot [  
				mark=triangle*, mark size=1pt,   color= blue
				] 
				table 
				[ 
				x expr=\thisrowno{0}, y expr=\thisrowno{2}/\thisrowno{0} 
				] {sample3.txt};
				\addplot [  
				mark=square*, mark size=1pt,   color= yellow
				] 
				table 
				[ 
				x expr=\thisrowno{0}, y expr=\thisrowno{3}/\thisrowno{0}
				] {sample3.txt}; 
				\addplot [  
				mark=diamond*, mark size=1pt,  color= black
				] 
				table 
				[ 
				x expr=\thisrowno{0}, y expr=\thisrowno{4}/\thisrowno{0} 
				] {sample3.txt}; 
			\end{axis} 
		\end{tikzpicture}   
	\hfill%
		\begin{tikzpicture} 
			\begin{axis}[ 
				xlabel=$T$, ymax= 15, 
				legend pos=north west, title={$r_2=5, r_1=1$}]           
				\addplot [  
				mark=*, mark size=1pt,  color= red
				] 
				table 
				[ 
				x expr=\thisrowno{0}, y expr=\thisrowno{6}/\thisrowno{0}
				] {sample3.txt}; 
				\addplot [  
				mark=triangle*, mark size=1pt,   color= blue
				] 
				table 
				[ 
				x expr=\thisrowno{0}, y expr=\thisrowno{7}/\thisrowno{0} 
				] {sample3.txt};
				\addplot [  
				mark=square*, mark size=1pt,   color= yellow
				] 
				table 
				[ 
				x expr=\thisrowno{0}, y expr=\thisrowno{8}/\thisrowno{0}
				] {sample3.txt}; 
				\addplot [  
				mark=diamond*, mark size=1pt,  color= black
				] 
				table 
				[ 
				x expr=\thisrowno{0}, y expr=\thisrowno{9}/\thisrowno{0} 
				] {sample3.txt}; 
			\end{axis} 
		\end{tikzpicture}   
 \end{subfigure}
 
 \begin{subfigure}[b]{1.0\textwidth} 
 	\begin{tikzpicture} 
 		\begin{axis}[ 
 			xlabel=$\ln T$, ylabel=Total Revenue Loss,
 			ymax= 280, legend pos=north west, title={$r_2=2, r_1=1$} 
 			]    
 			\addplot [  
 			mark=*, mark size=1pt,  color= red
 			] 
 			table 
 			[ 
 			x expr=\thisrowno{5}, y expr=\thisrowno{1}/100 
 			] {sample3.txt}; 
 			\addplot [  
 			mark=triangle*, mark size=1pt,  color= blue
 			] 
 			table 
 			[ 
 			x expr=\thisrowno{5}, y expr=\thisrowno{2}/100 
 			] {sample3.txt};
 			\addplot [  
 			mark=square*, mark size=1pt,  color= yellow
 			] 
 			table 
 			[ 
 			x expr=\thisrowno{5}, y expr=\thisrowno{3}/100  
 			] {sample3.txt}; 
 			\addplot [  
 			mark=diamond*, mark size=1pt,  color= black
 			] 
 			table 
 			[ 
 			x expr=\thisrowno{5}, y expr=\thisrowno{4}/100 
 			] {sample3.txt};        
 		\end{axis} 
 	\end{tikzpicture}\hfill%
 	\begin{tikzpicture} 
 		\begin{axis}[ 
 			xlabel=$\ln T$, 
 			ymax= 580, legend pos=north west, title={$r_2=5, r_1=1$} 
 			]    
 			\addplot [  
 			mark=*, mark size=1pt,  color= red
 			] 
 			table 
 			[ 
 			x expr=\thisrowno{5}, y expr=\thisrowno{6}/100 
 			] {sample3.txt}; 
 			\addplot [  
 			mark=triangle*, mark size=1pt,  color= blue
 			] 
 			table 
 			[ 
 			x expr=\thisrowno{5}, y expr=\thisrowno{7}/100 
 			] {sample3.txt};
 			\addplot [  
 			mark=square*, mark size=1pt,  color= yellow
 			] 
 			table 
 			[ 
 			x expr=\thisrowno{5}, y expr=\thisrowno{8}/100  
 			] {sample3.txt}; 
 			\addplot [  
 			mark=diamond*, mark size=1pt,  color= black
 			] 
 			table 
 			[ 
 			x expr=\thisrowno{5}, y expr=\thisrowno{9}/100 
 			] {sample3.txt};        
 		\end{axis} 
 	\end{tikzpicture}
 \end{subfigure}\hfill%
	\caption{The expected loss under different algorithms} 
	\label{fig:loss} 
\end{figure} 

We make the following observations: 
\begin{enumerate}
	\item The total expected loss under the reoptimization policy grows sublinearly with the horizon length $T$, and appears to scale approximately logarithmically. Although the theoretical upper bound is $O(\ln^3 T)$, the numerical results suggest a much milder growth rate, close to linear in $\ln T$. This is further supported by the approximately linear relationship observed between the expected loss and $\ln T$ across all tested instances. These findings indicate that the policy effectively mitigates the accumulation of errors over time, in contrast to classical heuristics where the loss typically scales on the order of $\sqrt{T}$. 
	\item The average loss per period decreases steadily as $T$ increases and converges to zero. Moreover, this qualitative behavior remains stable across different parameter settings and horizon lengths. This confirms that the policy is asymptotically optimal on a per-period basis and highlights the robustness of the algorithm. The result reflects the diminishing impact of learning errors as more observations are collected and incorporated into the reoptimization process. 
	\item The results demonstrate that reoptimization plays a critical role in controlling capacity allocation, while the learning component governs the residual loss. Continuous reoptimization prevents the buildup of capacity mismatch, and the remaining loss is primarily due to estimation uncertainty. This explains why the total loss grows only logarithmically rather than at a polynomial rate.
\end{enumerate} 


\section{Concluding Remarks}
\label{sec:con}
In this work, we develop a new class of reoptimization-based algorithms for Contextual Bandits with Knapsack (CBwK), a fundamental model for sequential decision-making under resource constraints. By integrating upper-confidence-bound (UCB) learning with dynamically updated capacity allocation, our approach provides a simple and implementable framework that effectively balances exploration and exploitation in environments with heterogeneous customers, products, and limited resources. 
A key innovation of our method is the use of repeated reoptimization to continually recalibrate decisions to the remaining capacity, which enables the algorithm to adapt to stochastic arrivals and noisy rewards while maintaining feasibility at every step.

Our theoretical contribution is the derivation of an average regret bound of $O(\frac{(\ln T)^3}{T})$ which significantly improves upon the 
$O(\frac{1}{\sqrt{T}})$ rates that characterize prior reoptimization-based methods in related dynamic-pricing and resource-allocation settings. This demonstrates the power of reoptimization not only as a heuristic—as often used in practice—but also as a tool that can be rigorously analyzed to achieve provably near-optimal performance. Our numerical experiments further highlight the importance of re-solving: algorithms without reoptimization or with separated learning phases perform substantially worse, validating the structural advantages of our online reoptimization design.

There are several directions for future work that we think would be valuable. 
First, extending our reoptimization methodology to nonlinear reward structures, nonparametric models, or contextual bandits with richer action spaces may broaden its applicability. 
Second, mathematical generality is indeed limited by Assumption 1. 
Unfortunately, the techniques in this paper do not allow us to relax this assumption. We leave this challenging extension to future work.



\bibliography{bib}
\bibliographystyle{chicago}

\appendix
\section{Proofs}
\begin{proof}[Proof of Theorem 1:]
	Recall the LP of (10) and (15) in the following,
	\begin{eqnarray*}
		\label{eq:KKT-ep}
		\min                 && -\sum\limits_{i=1}^I\sum_{j=1}^J x_{ij} {U_i}^T Y_j,\\  \label{eq:tight}
		\textrm{ subject to }&& \sum\limits_{i} x_{ij}  a_{jk} \le b_k,\quad k=1,\ldots,K,\\ \nonumber
		&& \sum_{j} x_{ij} \le \lambda_i, \quad i=1,\ldots,I,\\ \nonumber
		&& x \ge 0.
	\end{eqnarray*}
	The Lagrangian functions of them are  
	\begin{align*}L(\mu,\lambda,x)=-\sum\limits_{i=1}^I\sum_{j=1}^J x_{ij} {U_i}^T Y_j + \sum\limits_{k=1}^K \mu_k(\sum\limits_{i} x_{ij}  a_{jk}-b_k)+ \sum\limits_{i=1}^I (\sum_{j} x_{ij}-\lambda_i),
	\end{align*} 
	with $\mu,\lambda,x \ge 0.$
	Reordering in terms of $x_{ij}$ in $L(\mu,\lambda,x)$ derives that $-{U_i}^T Y_j+\sum\limits_{k=1}^K \mu_k a_{jk}+\lambda_i \le 0$ for any $x_{ij}$. 
	So we kave $\mu_k \le \frac{{U_i}^T Y_j}{a_{jk}}$ for any $x_{ij}$. thus $\mu_k \le \frac{U_i^T Y_j}{a_{jk}}$.
	Let $C_{\mu}=\max\limits_{k} \{\min\limits_{i,j} \{\frac{U_i^T Y_j}{a_{jk}}\}\}$, then $C_{\mu}$ is an constant upper bound for each $\mu_k$ under any $b'$ satisfying $\|b'-B/T\|_\infty <\epsilon$.
	Noting that $\mu=r^T W^{-1}$, so whenever $\|b'-B/T\|_\infty <\epsilon$,
	we have that $r^T W^{-1} (b'-B/T)=\mu(b'-B/T) \le k c_{\mu} \epsilon$. 
	This shows that the $\epsilon$ can be chosen at least at $\frac{\Delta}{ k C_{\mu}}$, which is in order of $\Theta(\Delta)$. 
\end{proof}

\begin{definition}
	We define $w_{it}$ in the below to bound the pseudo regret of a type-$i$ customer arriving in period $t$, i.e.,
	\begin{align*}
		w_{it}=\left\{ \begin{array} {ll}
			\sqrt{Y_{j(t)}^T M_{i(t)t}^{-1} Y_{j(t)}}, & i=i(t),\\
			0, & o.w.
		\end{array}\right.
	\end{align*}
\end{definition}

\begin{corollary}
	For all $i=1,\ldots,I$ and $t=1,\ldots,T$, $\det(M_{it+1}) =\prod\limits_{s=1}^{t} (1+w_{is}^2)$.
\end{corollary}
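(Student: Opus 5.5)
The plan is to prove the identity by induction on $t$, using the matrix determinant lemma at each step. Fix a customer type $i$. The base case is $t=0$: we have $M_{i1}=I_{J\times J}$, so $\det(M_{i1})=1$, which equals the empty product.

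For the inductive step, use the update rule \eqref{equ:update_M}. It gives $M_{i,t+1}=M_{it}+\id{i(t)=i}\,Y_{j(t)}Y_{j(t)}^T$, and we split into two cases.
\begin{itemize}
\item If $i(t)\neq i$, then $M_{i,t+1}=M_{it}$. Also $w_{it}=0$ by definition, so the new factor is $1+w_{it}^2=1$ and the identity carries over unchanged.
\item If $i(t)=i$, then $M_{i,t+1}=M_{it}+Y_{j(t)}Y_{j(t)}^T$. The matrix $M_{it}$ is positive definite (it is the identity plus positive semidefinite terms), hence invertible. We can therefore factor
\begin{align*}
M_{i,t+1}=M_{it}^{1/2}\bigl(I+M_{it}^{-1/2}Y_{j(t)}Y_{j(t)}^TM_{it}^{-1/2}\bigr)M_{it}^{1/2}.
\end{align*}
The middle factor is the identity plus a rank-one matrix, so its determinant is $1+Y_{j(t)}^TM_{it}^{-1}Y_{j(t)}$. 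By definition this equals $1+w_{it}^2$, since $i(t)=i$. Hence $\det(M_{i,t+1})=\det(M_{it})(1+w_{it}^2)$.
\end{itemize}
Multiplying these one-step factors over $s=1,\ldots,t$ gives the claimed product formula.

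No step here is a genuine obstacle. The only care needed is to use the current matrix $M_{it}$ in $w_{it}$ and not the updated one; the definition of $w_{it}$ indeed uses $M_{i(t)t}$, which is the pre-update matrix.

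Beyond the statement itself: together with the AM–GM bound on the eigenvalues of $M_{it}$ (trace at most $J+t$), this product formula is what feeds Lemma~\ref{lemma:detwhole} and Lemma~\ref{lemma:g-to-M}. For the latter, one bounds $\sum_s \min(w_{is}^2,1)\le 2\ln\det(M_{i,t+1})$ via the inequality $\min(u,1)\le 2\ln(1+u)$.
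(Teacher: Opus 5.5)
Your proposal is correct and follows essentially the same route as the paper. Both arguments factor $M_{i,t+1}$ through $M_{it}^{1/2}$, evaluate the determinant of the identity-plus-rank-one middle factor as $1+w_{it}^2$, and induct from $M_{i1}=I_{J\times J}$. Your explicit treatment of the case $i(t)\neq i$ (where $M_{i,t+1}=M_{it}$ and $w_{it}=0$) is slightly more careful than the paper's proof, which suppresses the index $i$ and writes out only the update case.
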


\begin{proof}[Proof of Corollray 1:]
	In this proof, we omit the superscript $i$ for simplicity.  
	We use $Y_t=Y_{j(t)}$ for short.
	By (8), we have 
	\begin{eqnarray}
		\det(M_{t+1}) \nonumber
		&=&\det(M_t+Y_tY_t^T)\\\nonumber
		&=&\det(M_t^{1/2}(I+M_t^{-1/2}Y_tY_t^TM_t^{-1/2})M_t^{1/2})\\\nonumber
		&=&\det(M_t)\det(I+M_t^{-1/2}Y_t(Y_tM_t^{-1/2})^T)\\\nonumber
		&=&\det(M_t)\det(I+v_tv_t^T), \nonumber
	\end{eqnarray}
	where $v_t=M_t^{-1/2}Y_t$.
	Note that $1+w_t^2$ is a eigenvalue of $I+v_tv_t^T$, since
	\begin{align*}
		(I+v_tv_t^T)v_t=v_t+v_t(v_t^Tv_t)=v_t(1+Y_t^T M_t^{-1} Y_t)=(1+w_t^2)v_t.
	\end{align*}
	Since $v_tv^T_t$ is a rank-one matrix, all the other eigenvalues of $I+v_tv_t^T$ equal $1$. 
	It follows that $\det(I+v_tv_t^T)=1+w_t^2$.
	Recalling that $M_1$ is the identity matrix, the result follows by induction. 
\end{proof}

\begin{proof}[Proof of Theorem 2:]
	In this proof, we omit the superscript $i$ for simplicity.  
	Also, we will use $Y_t=Y_{j(t)}$ for short.
	\begin{eqnarray}
		\mathrm{Tr}(M_t)\nonumber
		&=&\mathrm{Tr}(I+\sum\limits_{s<t} Y_sY_s^T)\\\nonumber
		&=&\mathrm{Tr}(I)+\sum\limits_{s<t} \mathrm{Tr}(Y_sY_s^T)\\\nonumber
		&=&J+\sum\limits_{s<t} ||Y_s||_2\\\nonumber
		&\le& Jt. \nonumber
	\end{eqnarray}
	The trace of $M_t$ equals the sum of the eigenvalues of $M_t$. 
	Note that $M_t$ is always positive definite, and hence its eigenvalues are all positive. 
	Subject to these constraints, $\det(M_t)$ is maximized when all the eigenvalues are equal.  In this case, $\det(M_t)$ is at most $t^J$. 
\end{proof}

\begin{proof}[Proof of Theorem 3:]
	Note that when $y \le 1$, we have that $y \le 2\ln(1+y)$.
	Hence, $\min(w_{ijs}^2,1) \le 2\ln(1+w_{ijs}^2)$ which induces that
	\begin{eqnarray}
		\sum\limits_{s=1}^{t} \sum\limits_{i,j} x_{ijs} \min(w_{ijs}^2,1)\nonumber
		&\le&2 \sum\limits_{s=1}^{t} \sum\limits_{i,j} x_{ijs} \ln(1+w_{ijs}^2)\\\nonumber
		&=&2 \E \sum\limits_{i=1}^{I} \ln(\det(M_{it})) \\\nonumber
		&\le& 2 IJ\ln(t), \nonumber
	\end{eqnarray}
	the second line uses Corollray 1 and the last inequality uses Lemma 2.
	
	By the Cauchy-Schwartz inequality and Proposition 6, we have that
	\begin{align*}
		g_s^2 \le  (2 \sqrt{\beta_s} \sum\limits_{i,j} x_{ijs} \min(w_{ijt},1))^2 
		\le  4 \beta_s (\sum\limits_{i,j} x_{ijs} \min(w_{ijs}^2,1)).
	\end{align*}
	Then \begin{eqnarray}
		\sum\limits_{s=1}^{t} g_s^2 \nonumber
		&\le&\sum\limits_{s=1}^{t} 4\beta_s(\sum\limits_{i,j} x_{ijs} \min(w_{ijs}^2,1))\\\nonumber
		&\le&4\beta_t \sum\limits_{s=1}^{t} \sum\limits_{i,j} x_{ijs} \min(w_{ijs}^2,1)\\\nonumber
		&\le&4\beta_t IJ\ln(t).\nonumber
	\end{eqnarray} 
\end{proof}

\begin{proof}[Proof of Theorem 4:]
	By the updating rule (9) and $\xi_t=r_t-{U_i}^T Y_{j(t)}$, we have that
	\begin{eqnarray}
		z_{it} \nonumber
		&=&M_{it}(\hat{U}_{it}- U_i)  \\\nonumber
		&=&M_{it}(M_{it}^{-1} \sum\limits_{s=1}^{t} r_s Y_{j(s)} \id{i(s)=i} - U_i) \\\nonumber
		&=&\sum\limits_{s=1}^{t} r_s Y_{j(s)} \id{i(s)=i} - M_{it} U_i \\\nonumber
		&=&\sum\limits_{s=1}^{t} r_s Y_{j(s)} \id{i(s)=i} - (I + \sum\limits_{s=1}^{t}  Y_{j(s)}Y_{j(s)}^T \id{i(s)=i}) U_i \\\nonumber
		&=&\sum\limits_{s=1}^{t} (r_s-{U_i}^T Y_{j(s)}) Y_{j(s)} \id{i(s)=i} - U_i \\\nonumber
		&=&\sum\limits_{s=1}^{t} \xi_s Y_{j(s)} \id{i(s)=i}-U_i. \nonumber
	\end{eqnarray} 
\end{proof}

\section{Online Updating without re-solving (UCB) Algorithm}
\label{sc:non-UCB}
\begin{itemize}
	\item Initialization: Set $b_{k}=\frac{B_k}{T}$, $M_{i1}=I_{J \times J}$ and $\hat{U}_{i1}=0$, $i=1,2,\cdots,I$, $k=1,\ldots,K$.
	\item For $t=1$ to $T$:
	\begin{eqnarray*}
		\textrm{CB}_{it}&=&\{v: \|v-\hat{U}_{it} \|_{2,M_{it}} \le \sqrt{\beta_t} \},\quad \forall i\in[1,I],\\
		\textrm{CB}_t&=&\prod\limits_{i=1}^I \textrm{CB}_{it}.
	\end{eqnarray*}
	Note that $\textrm{CB}_t$ is the product of $I$ intervals.
	
	\item In each period $t$, solve the following optimization problem to obtain probabilities $x^*_t$.
	\begin{align*}
		\label{equ:max-potential_OFF}
		x^*_t =\argmax\limits_{x\in D_t} \max\limits_{V \in \textrm{CB}_t} \sum_{i,j} x_{ij} V_i^T Y_j,
	\end{align*}
	where, 
	\begin{align*}
		D_t=\{x\ |\ \sum_{i=1}^I \sum_{j=1}^J x_{ij} a_{jk} \le b_{k},\ \forall k\in[1,K];\ \sum\limits_{j} x_{ij} \le \lambda_i,\ \forall i\in[1,I];\ x^* \ge 0\}
	\end{align*}
	is the feasible region.   
	The inner ``max'' means that we choose the best potential reward based on all the possible vectors in $\textrm{CB}_t$. 
	
	\item Use the solution $x^*_t$ to assign the customer at $t$, if any, to a resource and collect the reward $r_t$.
	
	\item Update the estimator $\hat{U}_{it+1}$ and matrix $M_{it+1}$ via
	\begin{eqnarray}
		M_{it+1}&=&\left\{ 
		\begin{array} {ll}
			M_{it}+ Y_{j(t)} Y_{j(t)}^T,\quad i=i(t),\\ \nonumber
			M_{it},\quad \mbox{o.w.}
		\end{array}\right. \label{equ:update_M_OFF}\\ \nonumber
		\hat{U}_{it+1}&=&\left\{ 
		\begin{array} {ll}
			M_{it+1}^{-1} \sum\limits_{s=1}^t r_s Y_{j(s)}\id{i(s)=i} ,\quad i=i(t),\\
			\hat{U}_{it},\quad o.w.
		\end{array}\right.  \label{equ:update_XA_OFF}
	\end{eqnarray}
\end{itemize}

\section{Separated learning and exploitation with re-solving (Re-SEP) algorithm}
\label{sc:Re-SEP}
In this algorithm we have two stage: the exploration stage and the exploitation stage.
Let $L$ be the length of the exploration stage and we will choose a proper $L$ later.

\begin{itemize}
	\item \textbf{Stage 1 (Exploration)}
	\begin{enumerate}
		\item Assign exploration match: assign the $i$-th type customer to resource from $1$ to $J$ sequentially and periodically. 
		\item Set $\hat{U}_{i}=0$ and $M_{i}=I_{J \times J}$ for $i=1,\ldots,I$.
		\item For $t=1$ to $L$, do: Collect reward $r_t$, and update the estimator $\hat{U}_{i}$ and matrix $M_{i}$ as 	
		\begin{eqnarray}
			\hat{U}_{i}&=&
			\left\{ 
			\begin{array} {ll}
				(M_{i}+ Y_{j(t)} Y_{j(t)}^T)^{-1} (M_{i} \hat{U}_{i} + r_t Y_{j(t)}), \quad i=i(t),\\ \nonumber
				\hat{U}_{i}, \quad \mbox{o.w.} 
			\end{array} \right. \\ \nonumber
			M_{i}&=&\left\{ 
			\begin{array} {ll}
				M_{i}+ Y_{j(t)} Y_{j(t)}^T, \quad i=i(t),\\ \nonumber
				M_{i},\quad \mbox{o.w.} 
			\end{array} \right. 
		\end{eqnarray}
	\end{enumerate}
	
	\item \textbf{Stage 2 (Exploitation)}
	\begin{enumerate}
		\item Count each amount $b_k$ of type-$k$ of resource used in Stage 1 and reset the average $b_{kL}=\frac{B_k-b_k}{T-L}$ for $k=1,\ldots,K$.
		\item For $t=L+1$ to $T$, do: 
		\begin{enumerate}
			\item Solve the following optimization problem to obtain probabilities $x^*_t$ in period $t$
			\begin{align*}
				x^*_t =\argmax\limits_{x\in D_t} \sum_{i,j} x_{ij} \hat{U}_i^T Y_j,
			\end{align*}
			where, $D_t=\{x\ |\ \sum_{i=1}^I \sum_{j=1}^J x_{ij} a_{jk} \le b_{kt},\ \forall k\in[1,K];\ \sum\limits_{j} x_{ij} \le \lambda_i,\ \forall i\in[1,I];\ x \ge 0\}$ is the feasible region.  
			
			\item Update the capacity allocated to period $t+1$, 
			\begin{align*}
				b_{k t+1}=b_{kt} - \frac{(d_{kt}-b_{kt})}{T-t},\quad \forall k\in[1,K].
			\end{align*} 
			
			\item Use the solution $x^*_t$ to assign the customer at $t$, if any, to a resource and get reward $r_t$.
		\end{enumerate}
	\end{enumerate}
\end{itemize}
The above algorithm give you a total reward $R=\sum\limits_{t=1}^T r_t$ based on the realization of solution $x^*_t$.
According to \cite{CJD2018}, an optimal tuning parameter for $L$ is to set $L=\sqrt{kT}$, where $k$ is a fixed number.

\section{Separated learning and exploitation without re-solving (SEP) algorithm}
\label{sc:SEP}
In this algorithm we have two stage: the exploration stage and the exploitation stage.
Let $L$ be the length of the exploration stage and we will choose a proper $L$ later.

\begin{itemize}
	\item \textbf{Stage 1 (Exploration)}
	\begin{enumerate}
		\item Assign exploration match: assign the $i$-th type customer to resource from $1$ to $J$ sequentially and periodically. 
		\item Set $\hat{U}_{i}=0$ and $M_{i}=I_{J \times J}$ for $i=1,\ldots,I$.
		\item For $t=1$ to $L$, do: Collect reward $r_t$, and update the estimator $\hat{U}_{i}$ and matrix $M_{i}$ as 	
		\begin{eqnarray}
			\hat{U}_{i}&=&
			\left\{ 
			\begin{array} {ll}
				(M_{i}+ Y_{j(t)} Y_{j(t)}^T)^{-1} (M_{i} \hat{U}_{i} + r_t Y_{j(t)}), \quad i=i(t),\\ \nonumber
				\hat{U}_{i}, \quad \mbox{o.w.} 
			\end{array} \right. \\ \nonumber
			M_{i}&=&\left\{ 
			\begin{array} {ll}
				M_{i}+ Y_{j(t)} Y_{j(t)}^T, \quad i=i(t),\\ \nonumber
				M_{i},\quad \mbox{o.w.} 
			\end{array} \right. 
		\end{eqnarray}
	\end{enumerate}
	
	\item \textbf{Stage 2 (Exploitation)}
	\begin{enumerate}
		\item Count each amount $b_k$ of type-$k$ of resource used in Stage 1 and reset the average $b_{kL}=\frac{B_k-b_k}{T-L}$ for $k=1,\ldots,K$.
		\item For $t=L+1$, solve the following optimization problem to obtain probabilities $x^*$
		\begin{align*}
			x^* =\argmax\limits_{x\in D_t} \sum_{i,j} x_{ij} \hat{U}_i^T Y_j,
		\end{align*}
		where, $D=\{x\ |\ \sum_{i=1}^I \sum_{j=1}^J x_{ij} a_{jk} \le b_{kL},\ \forall k\in[1,K];\ \sum\limits_{j} x_{ij} \le \lambda_i,\ \forall i\in[1,I];\ x \ge 0\}$ is the feasible region.  
		
		\item Use the solution $x^*$ to assign the customer at $t$, if any, to a resource and get reward $r_t$.
	\end{enumerate}
\end{itemize}
We still choose $L=\sqrt{kT}$, where $k$ is a fixed number.

\end{document}